\newcommand*{\citet}[1]{\AtNextCite{\AtEachCitekey{\defcounter{maxnames}{2}}} \textcite{#1}}
\newcommand*{\citetall}[1]{\AtNextCite{\AtEachCitekey{\defcounter{maxnames}{999}}} \textcite{#1}}
\newcommand*{\citep}[1]{\cite{#1}}
\newcommand{\INDSTATE}[1][1]{\STATE\hspace{#1\algorithmicindent}}
\newcommand{\ex}[2]{{\ifx&#1& \E \else \E_{#1} \fi \left[#2\right]}}
\newcommand{\var}[2]{{\ifx&#1& \Var \else \Var_{#1}\fi \left[#2\right]}}
\newcommand{\sdv}{\mathop{\mathrm{sd}}}
\newcommand{\dkl}[2]{\mathrm{D}\left(#1\middle\|#2\right)}
\newcommand{\prob}[2]{\pr_{#1}\left[#2\right]}
\newcommand{\nope}[1]{}
\newcommand\numberthis{\addtocounter{equation}{1}\tag{\theequation}}
\newtheorem{prop}[thm]{Proposition}
\newcommand{\interact}[2]{#1 {\rightarrow \atop \leftarrow} #2}
\newcommand{\vnote}[1]{\textcolor{red}{{\bf (Vitaly:} {#1}{\bf ) }} }
\newcommand{\tnote}[1]{\textcolor{blue}{{\bf (Thomas:} {#1}{\bf ) }}}
\newcommand{\vnote}[1]{}
\newcommand{\tnote}[1]{}
\newcommand{\ind}{\mathbbm{1}}
\providecommand\X{\mathcal{X}}
\providecommand{\cP}{{\mathcal P}}
\providecommand{\cQ}{{\mathcal Q}}
\providecommand{\cY}{{\mathcal Y}}
\newcommand{\mymax}[2]{\max\left\{#1,#2\right\}}
\title{Calibrating Noise to Variance\\in Adaptive Data Analysis}
\author{Vitaly Feldman\thanks{Google Brain. Part of this work was done while at IBM Research -- Almaden and while visiting the Simons Institute, UC Berkeley  \dotfill  \texttt{vitaly@post.harvard.edu}} \and Thomas Steinke\thanks{IBM Research -- Almaden  \dotfill  \texttt{alkls@thomas-steinke.net}}}\date{}
\begin{document}

\maketitle

\begin{abstract}
Datasets are often used multiple times and each successive analysis may depend on the outcome of previous analyses. Standard techniques for ensuring generalization and statistical validity do not account for this adaptive dependence. A recent line of work studies the challenges that arise from such adaptive data reuse by considering the problem of answering a sequence of ``queries'' about the data distribution where each query may depend arbitrarily on answers to previous queries.

The strongest results obtained for this problem rely on differential privacy -- a strong notion of algorithmic stability with the important property that it ``composes'' well when data is reused. However the notion is rather strict, as it requires stability under replacement of an arbitrary data element. The simplest algorithm is to add Gaussian (or Laplace) noise to distort the empirical answers. However, analysing this technique using differential privacy yields suboptimal accuracy guarantees when the queries have low variance.

Here we propose a relaxed notion of stability based on KL divergence that also composes adaptively. We show that our notion of stability implies a bound on the mutual information between the dataset and the output of the algorithm and then derive new generalization guarantees implied by bounded mutual information. We demonstrate that a simple and natural algorithm based on adding noise scaled to the standard deviation of the query provides our notion of stability. This implies an algorithm that can answer statistical queries about the dataset with substantially improved accuracy guarantees for low-variance queries. The only previous approach that provides such accuracy guarantees is based on a more involved differentially private median-of-means algorithm and its analysis exploits stronger ``group'' stability of the algorithm.
\end{abstract}


\footnotetext{Accepted for presentation at Conference on Learning Theory (COLT) 2018.}

\section{Introduction}
The central challenge in statistical data analysis is to infer the properties of some unknown population given only a small number of samples from that population. While a plethora of techniques for guaranteeing statistical validity are available, few techniques can account for the effects of \emph{adaptivity}. Namely, if a single dataset is used multiple times, then the choice of which subsequent analyses to perform may depend on the outcomes of previous analyses. This adaptive dependence increases the risk of overfitting --- that is, inferring a conclusion that does not generalize to the underlying population.

To formalize this problem, \citetall{DworkFHPRR14:arxiv} and subsequent works \cite[][etc.]{HardtU14,SteinkeU15,BassilyNSSSU16,FeldmanS17} study the following question: How many data samples are necessary to accurately answer a sequence of queries about the data distribution when the queries are chosen adaptively -- that is, each query can depend on answers to previous queries? Each query corresponds to a procedure that the analyst wishes to execute on the data. The goal is to design an algorithm that provides answers to these adaptive queries that are close to answers that would have been obtained had each corresponding analysis been run on independent samples freshly drawn from the data distribution.

A common and relatively simple class of queries are statistical queries \citep{Kearns:98}. A statistical query is specified by a function $\psi:\X \rar [0,1]$ and corresponds to analyst wishing to compute the true mean $\ex{X\sim \cP}{\psi(X)}$ of $\psi$ on the data distribution $\cP$. (This is usually done by using the empirical mean $\frac{1}{n} \sum_{i=1}^n \psi(S_i)$ on a dataset $S$ consisting of $n$ i.i.d.~draws from the distribution $\cP$.)  For example, such queries can be used to to estimate the true loss (or error) of a predictor, the gradient of the loss function, or the moments of the data distribution. Standard concentration results imply that, given $n$ independent samples from $\cP$, $k$ fixed (i.e.~not adaptively-chosen) statistical queries can be answered with an additive error of at most $O \left( \sqrt{\log(k)/n} \right)$ with high probability by simply using the empirical mean of each query. At the same time it is not hard to show that, for a variety of simple adaptive sequences of queries, using the empirical mean to estimate the expectation leads to an error of $\Omega(\sqrt{k/n})$ \citep{DworkFHPRR14:arxiv}. Equivalently, in the adaptive setting, the number of samples required to ensure fixed error scales linearly (rather than logarithmically in the non-adaptive setting) with the number of queries and, in particular, in the worst case, using empirical estimates gives the same guarantees as using fresh samples for every query (by splitting the dataset into $k$ parts).

\citet{DworkFHPRR14:arxiv} showed that, remarkably, it is possible to quadratically improve the dependence on $k$ in the adaptive setting by simply perturbing the empirical answers. Specifically, let $S \in \X^n$ denote a dataset consisting of $n$ i.i.d.~samples from some (unknown) probability distribution $\cP$. Given $S$, the algorithm receives $k$ adaptively-chosen statistical queries $\psi_1, \ldots, \psi_k : \X \to [0,1]$ one-by-one and provides $k$ approximate answers $v_1, \ldots, v_k  \in \R$. Namely, $v_j = \frac{1}{n} \sum_{i = 1}^n \psi_j(S_i) + \xi_j$, where each ``noise'' variable $\xi_j$ is drawn independently from $\mathcal{N}(0,\sigma^2)$. The results of \citet{DworkFHPRR14:arxiv} and subsequent sharper analyses \citep{BassilyNSSSU16,Steinke16} show that
, with high probability (over the drawing of the sample $S \sim \cP^n$, the noise $\xi$, and the choice of queries), we have the following guarantee \begin{equation}\forall j \in \{1, \cdots, k\} ~~~~~ \left| v_j - \ex{X \sim \cP}{\psi_j(X)} \right| \leq O \left( \sqrt{\frac{\sqrt{k \log k}}{n}} \right).\label{eqn:additive}\end{equation}
This quadratic relationship between $n$ and $k$ was also shown to be optimal in the worst case \citep{HardtU14,SteinkeU15}.

The approach of \citet{DworkFHPRR14:arxiv} relies on properties of differential privacy \citep{DworkMNS:06,DworkKMMN06} and known differentially private algorithms. Differential privacy is a stability property of an algorithm, namely it requires that replacing any element in the input dataset results in a small change in the output distribution of the algorithm. Specifically, a randomized algorithm $M : \X^n \to \mathcal Y$ is $(\varepsilon,\delta)$-differentially private if, for all datasets $s,s' \in \X^n$ that differ on a single element and all events $E \subseteq \mathcal{Y}$, $$\prob{}{M(s) \in E} \leq e^\varepsilon \prob{}{M(s')\in E} + \delta.$$

This stability notion implies that a function output by a differentially private algorithm on a given dataset generalizes to the underlying distribution \citep{DworkFHPRR14:arxiv,BassilyNSSSU16}. Specifically, if a differentially private algorithm is run on a dataset drawn i.i.d~from any distribution and the algorithm outputs a function, then the empirical mean of that function on the input dataset is close to the expectation of that function on sample from the same distribution.

The second crucial property of differential privacy is that it composes adaptively: running several differentially private algorithms on the same dataset still is differentially private (with somewhat worse parameters) even if each algorithm depends on the output of all the previous algorithms. This property makes it possible to answer adaptively-chosen queries with differential privacy and a number of algorithms have been developed for answering different types of queries. The generalization property of differential privacy then implies that such algorithms can be used to provide answers to adaptively-chosen queries while ensuring generalization \citep{DworkFHPRR14:arxiv}. Specifically, the algorithm for answering statistical queries mentioned above is based on the most basic differentially private algorithm: perturbation by adding Laplace or Gaussian noise \citep{DworkMNS:06}.

Differential privacy requires that the output distribution of an algorithm does not change much when any element of a dataset is replaced with an arbitrary other element in the domain $\X$. As a result, the amount of noise that needs to be added to ensure differential privacy scales linearly with the range of the function $\psi$ whose expectation needs to be estimated. If the range of $\psi$ is comparable to the standard deviation of $\psi(x)$ on $x$ drawn from $\cP$ (such as when $\psi$ has range $\zo$ and mean $1/2$) then the error resulting from addition of noise is comparable to the standard deviation of $\psi$. However, for queries whose standard deviation is much lower than the range, the error introduced by noise is much worse than the sampling error. Variance is much smaller than the range for a variety of common settings, for example, difference between candidate predictors for the same problem or individual input features when the input is usually sparse.

Achieving error guarantees in the adaptive setting that scale with the standard deviation instead of range is a natural problem. Recently, \citet{FeldmanS17}
gave a different algorithm that achieves such a guarantee. Specifically, their algorithm ensures that with probability at least $1-\beta$, \begin{equation}\forall j \in \{1, \cdots, k\} ~~~~~ \left| v_j - \ex{X \sim \cP}{\psi_j(X)} \right| \leq \sdv(\psi_j(\cP)) \cdot O \left( \sqrt{\frac{\sqrt{k \log^3 (k/\beta)}}{n}} \right) + \beta,\label{eqn:sdv}\end{equation} where $\sdv(\psi_j(\cP)) = \sqrt{\ex{Y \sim \cP}{(\psi_j(Y)-\ex{X \sim \cP}{\psi_j(X)})^2}}$ is the standard deviation of $\psi_j$ on the distribution $\cP$ and $\beta>0$ can be chosen arbitrarily.
Their algorithm is based on an approximate version of the median of means algorithm and its analysis still relies on differential privacy. (Their results extend beyond statistical queries, but we restrict our attention to statistical queries in this paper.)

In this work, we ask: does the natural algorithm that perturbs the empirical answers with noise scaled to the standard deviation suffice to answer adaptive queries with accuracy scaling to sampling error? To answer this seemingly simple question, we address a more fundamental problem: does there exist a notion of stability that has the advantages of differential privacy (namely, allows adaptive composition and implies generalization) but avoids the poor dependence on the worst-case sensitivity of the query. This algorithm was analyzed by \citet{BassilyFreund:16} via a notion of typical stability they introduced. Their analysis shows that the algorithm will ensure the correct scaling of the error with standard deviation but it does not improve on the naive mechanisms in terms of scaling with $k$.
Several works have considered relaxations of differential privacy in this context. For example, \citetall{BassilyNSSSU16} considered a notion of stability based on using KL divergence or total variation distance in place of differential privacy (which can be defined in terms of approximate max divergence). \citetall{Wang16} considered the expected KL divergence between the output of the algorithm when run on a random i.i.d~dataset versus the same dataset with one element replaced by a fresh sample; unfortunately, their stability definition does \emph{not} compose adaptively. Notions based on the mutual information between the dataset and the output of the algorithm and their relationship to differential privacy have also been studied \citep{DworkFHPRR15:arxiv,RussoZ16,RogersRST16,RaginskyRTWX16,XuR17}. However, to the best of our knowledge, these approaches do not give a way to analyze the calibrated noise addition that ensures correct dependence on $k$.

\subsection{Our Contributions}
We introduce new stability-based and information-theoretic tools for analysis of the generalization of algorithms in the adaptive setting. The stability notion we introduce is easier to satisfy than differential privacy, yet has the properties crucial for application in adaptive data analysis. These tools allow us to demonstrate that calibrating the variance of the perturbation to the empirical variance of the query suffices to ensure generalization, as long as the noise rate does not become too small. To ensure this lower bound on the noise rate we simply add a second order term to the variance of the perturbation.  Specifically, our algorithm is described in Figure \ref{fig:scaledgauss}. The only difference between our algorithm and previous work \citep{DworkFHPRR14:arxiv,BassilyNSSSU16} is that in prior work the variance of the Gaussian perturbation is fixed.

\begin{figure}[h!]
\begin{framed}
\begin{algorithmic}
\INDSTATE[0]{Parameters: $t,T>0$.}
\INDSTATE[0]{Input: $s \in \X^n$.}
\INDSTATE[0]{For $j=1,2, \cdots, k$ do:}
\INDSTATE[1]{Receive a statistical query $\psi_j : \X \to [0,1]$.}
\INDSTATE[1]{Compute $\mu_j = \frac{1}{n} \sum_{i=1}^n \psi_j(s_i)$ and $\sigma_j^2 = \frac{1}{n} \sum_{i = 1}^n \left( \psi_j(s_i) - \mu_j \right)^2$.}
\INDSTATE[1]{Sample $\xi_j \sim \mathcal{N}(0,1)$.}
\INDSTATE[1]{Let $v_j = \mu_j + \xi_j \cdot \sqrt{\mymax{\sigma_j^2/t}{1/T}}$.}
\INDSTATE[1]{Output answer $v_j$.}
\end{algorithmic}
\end{framed}
\caption{Calibrating noise to variance for answering adaptive queries.}\label{fig:scaledgauss}
\end{figure}

We prove that this algorithm has the following accuracy guarantee.

\begin{thm}[Main Theorem]\label{thm:main-intro}
Let $\cP$ be a distribution on $\X$ and let $M$ be our algorithm from Figure \ref{fig:scaledgauss} instantiated with $T=n^2/k$ and $t=n\sqrt{2\ln(2k)/k}$. Suppose $M$ is given a sample $S \sim \cP^n$ and is asked adaptive statistical queries $\psi_1, \cdots, \psi_k : \X \to [0,1]$. Then $M$ produces answers $v_1, \cdots, v_k \in \R$ satisfying the following. $$ \ex{}{\max_{j=1}^k \frac{|v_j - \ex{X \sim \cP}{\psi_j(X)}|}{\mymax{\tau \cdot \sdv(\psi_j(\cP))}{\tau^2}}} \leq 4,~~~~~\text{where}~~~~~ \tau = \sqrt{\frac{\sqrt{2k \ln (2k)}}{n}}.$$
\end{thm}
Intuitively (that is, ignoring the second term in the maximum), the conclusion of Theorem \ref{thm:main-intro} states that, with good probability, the error in each answer scales as the standard deviation of the query multiplied by $\tilde{O}\left(\sqrt{\sqrt{k}/n}\right)$ --- which is what would be expected if we used $n/\sqrt{k}$ fresh samples for each query. The $\ln k$ factor arises from the fact that we take a union bound over the $k$ queries.

More precisely, applying Markov's inequality to the conclusion of Theorem \ref{thm:main-intro}, shows that, with probability at least $90\%$, \begin{equation}\forall j ~~~~ \left| v_j - \ex{X \sim \cP}{\psi_j(X)} \right| \leq 40 \cdot \mymax{\tau \cdot \sdv(\psi_j(\cP))}{\tau^2} \leq \sdv(\psi_j(\cP)) \cdot 40\sqrt{\frac{\sqrt{2k \ln (2k)}}{n}} + 40 {\frac{\sqrt{2k \ln (2k)}}{n}}.\label{eqn:sdv2}\end{equation}
This guarantee is directly comparable to the earlier bound \eqref{eqn:sdv} of \citet{FeldmanS17} -- though it is weaker in two ways: First, Theorem \ref{thm:main-intro} is a bound on the expectation and does not readily yield high probability bounds (other than via Markov's inequality). Second, the second term in the maximum (which we think of as a low-order term) still depends linearly on the sensitivity and is potentially larger. The advantage of this algorithm is that it is substantially simpler than the earlier work.

\medskip

Now we turn to the analysis tools that we introduce. Clearly the \emph{empirical error} of our algorithm --- that is $|v_j - \mu_j|$ --- scales with the empirical standard deviation $\sigma_j$. However, we must bound the \emph{true error}, namely $|v_j - \ex{X \sim \cP}{\psi_j(X)}|$. By the triangle inequality, it suffices to bound the \emph{generalization error} $| \mu_j - \ex{X \sim \cP}{\psi_j(X)}|$ in terms of standard deviation and to relate the empirical standard deviation $\sigma_j$ to the true standard deviation $\sdv(\psi_j(\cP))$.

\subsubsection{Average leave-one-out KL stability and generalization}
The key to our analysis is the following stability notion.

\newcommand{\KLAS}{ALKL stable}
\begin{defn}[Average Leave-one-out KL stability]\label{defn:klas-intro}
An algorithm $M : \left(\X^n \cup \X^{n-1} \right) \to \mathcal{Y}$ is $\varepsilon$-\KLAS{} if, for all $s \in \X^n$, $$\frac{1}{n} \sum_{i \in [n]} \dkl{M(s)}{M(s_{-i})} \leq \varepsilon,$$ where $s_{-i} \in \X^{n-1}$ denotes $s$ with the $i^\text{th}$ element removed. Here $\dkl{\cdot}{\cdot}$ denotes the Kullback-Leibler divergence.
\end{defn}

Our notion differs from differential privacy in three significant ways.\footnote{These relaxations mean that ALKL stability is \emph{not} a good privacy definition, in contrast to differential privacy. In particular, because of the averaging, ALKL stability cannot distinguish between an algorithm that offers good privacy to all individuals and one that offers great privacy for $n-1$ individuals but terrible privacy for the last individual. Compromising a single data point is, however, not an issue for generalization.} First, we use stability to leaving one out (LOO) rather than replacing one element. Second, we average the stability parameter across the $n$ dataset elements. Third, we use KL divergence instead of (approximate) max divergence. This is necessary to obtain stronger bounds for our calibrated noise addition as our algorithm does not satisfy differential privacy with parameters that would be suitable to ensure generalization. We note that average LOO stability is a well-studied way to define algorithmic stability for the loss function (\eg, \citep{BousquettE02,PoggioRMN04}).
The use of KL divergence appears to be necessary to ensure adaptive composition of our averaged notion. Specifically, the following composition result is easy to prove.
\begin{lem}[Composition, see Lemma \ref{lem:composition}]\label{lem:composition-intro}
Suppose $M : \left(\X^n \cup \X^{n-1} \right) \to \mathcal{Y}$ is $\varepsilon$-\KLAS{} and $M' : \mathcal{Y} \times \left(\X^n \cup \X^{n-1} \right) \to \mathcal{Z}$ is such that $M'(y,\cdot) : \left(\X^n \cup \X^{n-1} \right) \to \mathcal{Z}$ is $\varepsilon'$-\KLAS{} for all $y \in \mathcal{Y}$. Then the composition $s \mapsto M'(M(s),s)$ is $(\varepsilon+\varepsilon')$-\KLAS{}.
\end{lem}

Using composition, we can show that our algorithm (Figure \ref{fig:scaledgauss}, with the parameters set as in Theorem \ref{thm:main-intro}) is $\frac{kt}{n^2}$-\KLAS{}. In particular, we show that each one of the $k$ answers is computed in a way that is $\frac{t}{n^2}$-\KLAS{}. This follows from the properties of the KL divergence between Gaussian distributions and the way we calibrate the noise. (Alternatively, we could use Laplace noise to obtain similar results.)

We note that $\sqrt{2\varepsilon}$-differential privacy \citep{DworkMNS:06}, notions based on Renyi differential privacy \citep{BunS16,Mironov17}, and $\varepsilon$-KL-stability \citep{BassilyNSSSU16} all imply $\varepsilon$-ALKL stability\footnote{It may be necessary to extend an algorithm satisfying one of these definitions to inputs of size $n-1$ to satisfy ALKL stability. This can be done by simply padding such an input with one arbitrary item.} Thus we can also compose any ALKL stable algorithm with any of the many algorithms satisfying one of the aforementioned definitions.

Crucially, average KL-divergence is strong enough to provide a generalization guarantee that scales with the standard deviation of the queries, as we require. Our proof is based on the high-level approach introduced by \citet{DworkFHPRR15:arxiv} who first convert a stability guarantee to an upper bound on information between the input dataset and the output of the algorithm and then derive generalization bounds from the bound on information. Here, we demonstrate that ALKL stability implies a bound on the mutual information between the input and output of the algorithm when run on independent samples and then derive generalization guarantees from the bound on mutual information \footnote{We thank Adam Smith for suggesting that we try this approach to proving generalization for ALKL stable algorithms.}
\begin{prop}[see Prop.~\ref{prop:mi}]\label{prop:gen-intro}
Let $M : \left( \mathcal{X}^n \cup \mathcal{X}^{n-1} \right) \to \mathcal{Q}$ be $\varepsilon$-\KLAS{}. Let $S \in \mathcal{X}^n$ consist of $n$ independent samples from some distribution $\cP$. Then \begin{equation}I(S;M(S)) \leq \varepsilon n,\label{eqn:MI-men}\end{equation} where $I$ denotes mutual information.
\end{prop}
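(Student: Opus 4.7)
The plan is to reduce the bound on $I(S;M(S))$ to a sum of per-coordinate conditional mutual informations, each of which can be controlled by a single leave-one-out KL term, so that the ALKL hypothesis can be applied directly. Let $Y = M(S)$.

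First I would establish the ``leave-one-out chain rule bound''
\[
I(S; Y) \;\leq\; \sum_{i=1}^n I(S_i; Y \mid S_{-i}).
\]
By the usual chain rule $I(S; Y) = \sum_{i=1}^n I(S_i; Y \mid S_1, \ldots, S_{i-1})$, and since $S_1, \ldots, S_n$ are independent, $H(S_i \mid S_1,\ldots,S_{i-1}) = H(S_i) = H(S_i \mid S_{-i})$. Because conditioning cannot increase entropy, $H(S_i \mid Y, S_1, \ldots, S_{i-1}) \geq H(S_i \mid Y, S_{-i})$. Subtracting gives the claimed inequality termwise.

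Next, for fixed $i$ and fixed $s_{-i} \in \X^{n-1}$, I would invoke the standard variational identity
\[
I(S_i; Y \mid S_{-i} = s_{-i}) \;=\; \min_{Q} \E_{S_i \sim \cP}\!\left[\dkl{M(s_{-i}, S_i)}{Q}\right],
\]
with the minimum attained at $Q = P_{Y \mid S_{-i} = s_{-i}}$. Crucially, this gives an \emph{upper bound} for any choice of $Q$ we substitute; we pick $Q = M(s_{-i})$, which is a legitimate distribution because the definition of ALKL stability requires $M$ to be defined on $\X^{n-1}$ as well. This yields
\[
I(S_i; Y \mid S_{-i} = s_{-i}) \;\leq\; \E_{S_i}\!\left[\dkl{M(s_{-i}, S_i)}{M(s_{-i})}\right].
\]
Taking expectation over $S_{-i} \sim \cP^{n-1}$, summing over $i$, and interchanging sum and expectation gives
\[
I(S; Y) \;\leq\; \sum_{i=1}^n \E_{S}\!\left[\dkl{M(S)}{M(S_{-i})}\right] \;=\; n \cdot \E_S\!\left[\tfrac{1}{n}\sum_{i=1}^n \dkl{M(S)}{M(S_{-i})}\right] \;\leq\; n\varepsilon,
\]
where the last step is the ALKL hypothesis applied pointwise to each realization of $S$.

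The only real obstacle is recognizing why one cannot attempt the same variational argument directly on the unconditional MI: the natural choice $Q = M(s_{-i})$ depends on $s_{-i}$ and so is only legitimate once we have conditioned on $S_{-i}$, which is exactly what the leave-one-out chain rule step above accomplishes. Everything else is manipulation of standard identities and a single invocation of the definition.
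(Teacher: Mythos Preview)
Your proof is correct and follows essentially the same approach as the paper: the paper also decomposes $I(S;M(S))$ via the chain rule, bounds $I(S_i;Y\mid S_{<i}) \leq I(S_i;Y\mid S_{-i})$ using independence (there via convexity of KL divergence rather than your entropy argument), and then applies the same variational identity (the paper's Lemma~\ref{lem:klmin}) to substitute $M(s_{-i})$ for the conditional marginal. The only cosmetic difference is that the paper packages the intermediate bound $\frac{1}{n}\sum_i I(S_i;Y\mid S_{-i}) \leq \varepsilon$ as a named notion (``MI stability,'' Definition~\ref{def:mis}) and proves it holds for arbitrary, not just product, input distributions---this is done because MI stability composes adaptively and may be of independent interest, but the underlying argument for the proposition itself is the same as yours.
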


To prove Proposition \ref{prop:gen-intro}, we introduce an intermediate notion of stability:
\newcommand{\MIS}{MI stable}
\begin{defn}[Mutual Information Stability]
\label{def:mis}
A randomized algorithm $M : \X^n \to \mathcal{Y}$ is $\varepsilon$-\MIS{} if, for any random variable $S$ distributed over $\X^n$ (including non-product distributions), $$\frac{1}{n} \sum_{i=1}^n I(M(S);S_i|S_{-i}) \leq \varepsilon.$$
\end{defn}
This notion is based on the notion of stability studied in \citep{RaginskyRTWX16} that considers only product distributions over the datasets and, as a result, does not compose adaptively.

We prove Proposition \ref{prop:gen-intro} by combining the following two facts.
\begin{itemize}
\item[(i)] $\varepsilon$-ALKL stability implies $\varepsilon$-MI stability. (Lemma \ref{lem:klas-mis}) To show this, we express $I(M(S);S_i|S_{-i})$ as the expectation over $S$ of the KL divergence of the distribution (over the randomness of $M$) of $M(S)$ from an appropriately weighted convex combination of distributions $M(S')$. (Specifically, $S'$ is $S$ with $S_i$ ``resampled.'') The ``mean-as-minimizer'' property of KL divergence (Lemma \ref{lem:klmin}) means we can simply replace this convex combination with $M(S_{-i})$ to complete the proof.
\item[(ii)] $\varepsilon$-MI stability implies the mutual information bound \eqref{eqn:MI-men}. (Lemma \ref{lem:mis-mi}) To prove this, we invoke the chain rule for mutual information along with the fact that $S_i$ is independent from $S_{-i}$ (which helps resolve the conditioning).
\end{itemize}
Further,  we point out that mutual information stability composes adaptively in the same way as ALKL stability and hence could be useful for understanding adaptive data analysis for more general queries (\eg unlike ALKL stability it does not require $M(S_{-i})$ to be defined).


As first shown in the context of PAC-Bayes bounds \citep{McAllester13} and more recently in \citep{RussoZ16}, a bound on mutual information implies generalization results. Using a similar technique, we show that, if the mutual information $I(S;\psi_j)$ is small (with $S$ consisting of $n$ i.i.d.~draws from $\cP$), we have $\ex{}{\frac{1}{n} \sum_{i=1}^n \psi_j(S_i)} \approx \ex{X \sim \cP}{\psi_j(X)}$. Moreover, the quality of the approximation scales with the standard deviation. (Specifically, the approximation bound depends on the moment generating function $\ex{}{e^{\lambda \mu_j}}$ of $\mu_j = \frac{1}{n} \sum_{i=1}^n \psi_j(S_i)$, which we bound using both the variance and the range of $\mu_j$.) We can similarly relate the empirical variance $\sigma_j^2$ to the true variance. Thus a bound on mutual information suffices to bound generalization error and, thus, prove Theorem \ref{thm:main-intro}.

Another known implication of bounded mutual information is that any event that would happen with sufficiently low probability on fresh data will still happen with low probability \citep{RussoZ16,RogersRST16}. In particular, if $E$ is some ``bad'' event -- such as overfitting the data or making a false discovery -- and we know that we are exponentially unlikely to overfit fresh data $S'$, then the probability of $M$ overfitting its input data $S$ is also small, provided the mutual information is small. (See Section \ref{sec:events} for additional details.)

One downside of using mutual information is that does not allow us to prove high probability bounds, as can be done with differential privacy and the notion of approximate max-information \citep{DworkFHPRR15:arxiv}. We note, however, that our analysis still upper bounds the expectation of the largest error among all the queries that were asked. In other words, a union bound over queries is built into the guarantees of the algorithm. Using known techniques, the confidence can be amplified at the expense of a somewhat more complicated algorithm. In addition, our algorithm yields stronger stability guarantees than just ALKL stability. For example, the minimum noise level of $1/T$ ensures differential privacy (albeit with relatively large parameters\footnote{Specifically, with the parameter setting from Theorem \ref{thm:main-intro}, our algorithm satisfies $\left(O\left(\sqrt{\log(1/\delta)}\right),\delta\right)$-differential privacy for all $\delta>k^{-\Omega(k)}$.}). The parameters can be improved using the averaging over the indices that we use in ALKL stability but that leads to a notion that does not appear to compose adaptively. Using a different analysis technique it might be possible to exploit the stronger stability properties of our algorithm to prove high probability generalization bounds. We leave this as an open problem.  On the other hand, stability with KL divergence is easier to analyze and allows a potentially wider range of algorithms to be used.

\subsection{Related work}
Our use of mutual information to derive generalization bounds is closely related to PAC-Bayes bounds first introduced by \citet{McAllester99} and extended in a number of subsequent works (see \citep{McAllester13} for an overview). In this line of work, the expected generalization error of a predictive model (such as classifier) randomly chosen from some data-dependent distribution $\cQ(S)$ is upper-bounded by the KL divergence between $\cQ$ and an arbitrary data-independent prior distribution $\cP_0$. One natural choice of $\cQ(S)$ is the output distribution of a randomized learning algorithm $\cA$ on $S$. By choosing the prior $\cP_0$ to be the distribution of the output of $\cA$ on a dataset drawn from $\cP^n$ one obtains that the expected generalization error is upper-bounded by the expected KL divergence between $\cQ(S)$ an $\cP_0$ \citep{McAllester13}. While this has not been pointed out in \citep{McAllester13}, this is exactly the mutual information between $S$ and $\cA(S)$.

Recently, interest in using information-based generalization bounds was revived by applications in adaptive data analysis \citep{DworkFHPRR15:arxiv}.
Specifically, \citet{DworkFHPRR15:arxiv} demonstrate that approximate max-information  between the input dataset and the output of the algorithm (a notion based on the infinity divergence between the joint distribution and the product of marginals) implies generalization bounds with high probability. They also showed that $(\eps,0)$-differential privacy implies an upper bound on approximate max-information (and this later extended to $(\eps,\delta)$-differential privacy by \citetall{RogersRST16}). \citet{RussoZ16} show that mutual information can also be used to derive bounds on expected generalization error and discuss several applications of these bounds. \citet{XuR17} show how to derive ``low-probability'' bounds on the generalization error in this context. (We note that \citep{RussoZ16,XuR17} use the same technique as that used in PAC-Bayes bounds and appear to have overlooked the direct connection between their results and the PAC-Bayes line of work.)

Recent work \citep{BassilyMNSY18} studies learning algorithms in the PAC model whose output has low mutual information with the input dataset. They also discuss generalization bounds based on mutual information and (independently) derive results similar to those we give in Section \ref{sec:events}.

\section{Notation, Definitions, \& Key Properties} 

We use $X \sim \cP$ to denote that $X$ is drawn from the distribution $\cP$. For the most part, we adopt the convention that upper-case letters denote random variables, whereas lower-case letters denote realizations thereof.
For $n \in \mathbb{N}$, we denote $[n]=\{1,2,\cdots,n\}$ and $S \sim \cP^n$ denotes that $S=(S_1, \cdots, S_n)$ consists of $n$ independent draws from the distribution $\cP$. We use $S_i$ to denote the $i^\text{th}$ element of $S$ and $S_{-i} = (S_1, \cdots, S_{i-1}, S_{i+1}, \cdots, S_n)$ to denote the other $n-1$ elements.
For two random variables $X$ and $Y$ and a realization $x$ of $X$, we use the notation $Y|X=x$ to denote the conditional distribution of $Y$ given $X=x$. 

For a distribution $\cP$ on $\X$ and a function $\psi : \X \to \R$, we use $\psi(\cP)$ to denote the distribution on $\R$ obtained by applying $\psi$ to a random sample from $\cP$. The mean of this distribution is denoted $\cP[\psi]=\ex{X \sim \cP}{\psi(X)}$. We use $\sdv(\psi(\cP))=\sqrt{\var{}{\psi(\cP)}} = \sqrt{\ex{X \sim \cP}{\psi(X)^2} - \ex{X \sim \cP}{\psi(X)}^2}$ to denote the standard deviation of this distribution. We also interpret a tuple $s \in \X^n$ as a distribution --- namely the distribution obtained by selecting $s_i$ for a random $i \in [n]$ --- and we analogously define the empirical mean and standard deviation: $s[\psi] = \frac{1}{n} \sum_{i=1}^n \psi[s_i]$ and $\sdv(\psi(s)) = \sqrt{\var{}{\psi(s)}} = \sqrt{\frac{1}{n} \sum_{i=1}^n (\psi(s_i) - S[\psi])^2}$.

\subsection{KL Divergence}

Before continuing, we first establish some relevant properties of the KL divergence. See the textbook by \citet{CoverT12} for an introduction to the properties of KL divergence (a.k.a.~relative entropy).

First we state the definition of KL divergence for completeness.

\begin{defn}
Let $\cP$ and $\cQ$ be probability distributions on a space $\Omega$. Suppose $\cP$ is absolutely continuous with respect to $\cQ$. Then the KL divergence from $\cQ$ to $\cP$ is $$\dkl{\cP}{\cQ} = \ex{X \sim \cP}{\ln \left(\frac{\cP(X)}{\cQ(X)}\right)},$$ where $\cP(x)$ and $\cQ(x)$ denote the probability mass or density functions of $\cP$ and $\cQ$ respectively evaluated at the point $X$. (More generally, $\cP(x)/\cQ(x)$ denotes the Radon-Nikodym derivative of $\cP$ with respect to $\cQ$ evaluated at $x$.)
\end{defn}

In some cases we will abuse notation and refer to $\dkl{X}{Y}$ where $X$ and $Y$ are ``random variables'' rather than formally-defined distributions. This should be read to be the divergence between the distribution of $X$ and the distribution of $Y$.

We state the well-known chain rule:
\begin{lem}[{\citep{CoverT12}, Theorem 2.5.3}]\label{lem:KLchain}
Let $\cP$ and $\cQ$ be two distributions over some domain $\X\times \cY$.
Then $$\dkl{\cP(x,y)}{\cQ(x,y)} = \dkl{\cP(x)}{\cQ(x)} + \ex{x' \sim \cP}{\dkl{\cP(y|x=x')}{\cQ(y|x=x')}}.$$
\end{lem}
Here $\cP(x)$ (or $\cQ(x)$) denotes the marginal distributions of $\cP$ (or $\cQ$) over $\X$ and $\cP(y|x=x')$ denotes the marginal distribution of $\cP$ on $\cY$ conditioned on $x=x'$.

We begin by looking at the KL divergence between two Gaussian distributions, as this is what our mechanism uses. Recall that the Gaussian (or normal) distribution with mean $\mu$ and variance $\sigma^2$ --- denoted $\mathcal{N}(\mu,\sigma^2)$ --- has a probability density at $x$ given by $\frac{1}{\sqrt{2\pi\sigma^2}} \exp\left(-\frac{(x-\mu)^2}{2\sigma^2}\right)$.
\begin{lem}[{\citep{GilAL13}, Table 3}]\label{lem:KLG}
Let $\mu, \tilde \mu, \sigma, \tilde \sigma \in \mathbb{R}$. Then $$\dkl{\mathcal N(\mu,\sigma^2)}{\mathcal N(\tilde\mu,\tilde\sigma^2)} = \frac{(\mu-\tilde\mu)^2}{2\tilde\sigma^2} + \frac12 \left( \frac{\sigma^2}{\tilde\sigma^2} -1 - \ln \left( \frac{\sigma^2}{\tilde\sigma^2}\right) \right).$$
\end{lem}
\begin{cor}\label{cor:KLG}
Let $\mu, \tilde \mu, \sigma, \tilde \sigma, x \in \mathbb{R}$. If $\sigma\tilde\sigma \ne 0$, then $$\dkl{\mathcal N(\mu,\sigma^2)}{\mathcal N(\tilde\mu,\tilde\sigma^2)} \leq \frac12  \cdot \left(\frac{(\mu-\tilde\mu)^2}{\sigma^2} +  \left( \frac{\tilde\sigma^2}{\sigma^2} -1 \right)^2 \cdot \min\left\{ 1, \frac{1}{6}\left(2+\frac{\sigma^2}{\tilde\sigma^2}\right) \right\}\right) \cdot \frac{\sigma^2}{\tilde\sigma^2}.$$
\end{cor}
\begin{proof}
This follows from Lemma \ref{lem:KLG} and the inequalities $x - 1  - \ln x \leq (1/x-1)^2 \cdot x$ and $x - 1  - \ln x \leq (1/x-1)^2 \cdot x \cdot (2+x)/6$ for all $x > 0$.

\end{proof}

An analogous result holds for the Laplace distribution. Although we do not work this out, it implies that our results can be extended to work for the Laplace distribution (with slightly different constants and a higher power of $\ln k$, since the Laplace distribution has heavier tails).
\newcommand{\Lap}[2]{{\ifx&#1& \else #1 + \fi} \mathsf{Lap}\left(#2\right)}
Recall that the Laplace distribution with mean $\mu$ and variance $2\sigma^2$ --- denoted $\Lap{\mu}{\sigma}$ --- has a probability density at $x$  given by $\frac{1}{2|\sigma|}\exp\left(\left|\frac{x-\mu}{\sigma}\right|\right)$.
\begin{lem}[{\citep{GilAL13}, Table 3}]\label{lem:laplace-kl}
Let $\mu,\tilde\mu,\sigma,\tilde\sigma\in\mathbb{R}$. If $\sigma,\tilde\sigma > 0$, then \begin{align*}
\dkl{\Lap{\mu}{\sigma}}{\Lap{\tilde\mu}{\tilde\sigma}}
=& \frac{\sigma}{\tilde\sigma}\left(e^{-\frac{|\tilde\mu-\mu|}{\sigma}}-\left(1-\frac{|\tilde\mu-\mu|}{\sigma}\right)\right)+\frac{\sigma}{\tilde\sigma}-1-\ln\left(\frac{\sigma}{\tilde\sigma}\right)\\
\leq& \frac{(\tilde\mu-\mu)^2}{2\sigma\tilde\sigma}+\frac{1}{7}\left(\frac{\tilde\sigma^2}{\sigma^2}-1\right)^2 \frac{\sigma^2}{\tilde\sigma^2}.
\end{align*}
\end{lem}

Next we have a technical lemma relating expectations to KL divergence.
\begin{lem}[{\citep{Gray11}, Theorem 5.2.1}] \label{lem:sup}
Let $\cP$ and $\cQ$ be probability distributions on $\Omega$. Then $$\dkl{\cP}{\cQ} = \sup_{f : \Omega \to \mathbb{R}} \ex{X \sim \cP}{f(X)} - \ln \ex{X \sim \cQ}{e^{f(X)}}.$$
\end{lem}
Setting $f(x)=tx$ and rearranging gives the bound we will use:
\begin{cor}\label{cor:MGF-KL}
Let $X$ and $Y$ be real-valued random variables and $t>0$. Then $$\ex{}{X} \leq \frac{1}{t} \left( \dkl{X}{Y} + \ln \ex{}{e^{tY}} \right).$$
\end{cor}

Next we note that KL divergence is a convex function.
\begin{lem}[{\citep{VanErwenH14}, Theorem 11}]\label{lem:klconv}
Let $\cP_0$, $\cP_1$, $\cQ_0$, $\cQ_1$ be probability distributions on the same space $\Omega$. For $t \in (0,1)$, let $\cP_t=(1-t)\cP_0+t\cP_1$ and $\cQ_t=(1-t)\cQ_0+t\cQ_1$ be the convex combinations interpolating between these distributions. Then, for all $t \in [0,1]$, $$\dkl{\cP_t}{\cQ_t} \leq (1-t) \dkl{\cP_0}{\cQ_0} + t \dkl{\cP_1}{\cQ_1}.$$
\end{lem}
This lemma immediately extends to convex combinations of more than two distributions.

Next we have a geometric statement about KL divergence:

\begin{lem}[{\citep{BanerjeeMDG05} Proposition 1 \& \citep{FrigyikSG08}, Theorem II.1}] \label{lem:klmin} 
Let $\{\cP_y\}$ be a family of distributions indexed by $y \in \cY$ and let $\cQ$ be a distribution on $\cY$. Let $\cP_\cQ=\ex{Y \sim \cQ}{\cP_Y}$ denote the convex combination of the distributions $\{\cP_y\}$ weighted by $\cQ$. Then $$\inf_\cR \ex{Y \sim \cQ}{\dkl{\cP_Y}{\cR}} = \ex{Y \sim \cQ}{\dkl{\cP_Y}{\cP_\cQ}}.$$
\end{lem}
Lemma \ref{lem:klmin} shows that the ``center'' of a collection of probability distributions --- as measured my minimizing average KL divergence to one distribution --- is none other than the mean of those distributions.

\subsection{Mutual Information}

A key quantity that we use is mutual information:

\begin{defn}[Mutual Information]
For two random variables $X$ and $Y$ jointly distributed according to a distribution $\cP$ over $\X \times \cY$, the mutual information between $X$ and $Y$ is $$I(X;Y) = \dkl{ \cP(x,y)}{ \cP(x) \times \cP(y)} = \ex{x' \sim \cP(x)}{\dkl{\cP(y|x=x'}{\cP(y)}},$$ where  $\cP(x) \times \cP(y)$ denotes the product of the marginal distributions of $\cP$.
\end{defn}
Note that mutual information is symmetric -- $I(X;Y)=I(Y;X)$.
\begin{defn}[Conditional Mutual Information]
For three random variables $X$, $Y$, and $Z$. The mutual information between $X$ and $Y$ conditioned on $Z$ is given by
$$I(X;Y|Z) = \ex{z \sim \cP_Z}{I(X|Z=z;Y|Z=z)},$$ where  $\cP_Z$ is the marginal distribution of $Z$.
\end{defn}
The key property is the chain rule:
\begin{lem}[Mutual Information Chain Rule]\label{lem:MIchain}
For random variables $X$, $Y$, and $Z$, we have
$$I(X,Y;Z) = I(X;Z) + I(Y;Z|X)$$
\end{lem}

\section{Average KL Stability \& Generalization}

In this section, we cover our theoretical tools, which center around our definition of average leave-one-out KL stability, which we restate here. For a randomized algorithm $M$ and input $s$ we use $M(s)$ denote the random variable obtained by running on $M$ on $s$ and the distribution of this random variable (according to the context).

\begin{defn}[Average Leave-one-out KL stability]\label{defn:klas}
An algorithm $M : \left(\X^n \cup \X^{n-1} \right) \to \mathcal{Y}$ is $\varepsilon$-\KLAS{} if, for all $s \in \X^n$, $$\frac{1}{n} \sum_{i \in [n]} \dkl{M(s)}{M(s_{-i})} \leq \varepsilon,$$ where $s_{-i} \in \X^{n-1}$ denotes $s$ with the $i^\text{th}$ element removed.

More generally, an algorithm $M : \X^n \to \mathcal{Y}$ is $\varepsilon$-\KLAS{} if for every $i\in [n]$ there exists an algorithm $M_i : \left(\X^{n-1} \right) \to \mathcal{Y}$ such that under the same conditions $$\frac{1}{n} \sum_{i \in [n]} \dkl{M(s)}{M_i(s_{-i})} \leq \varepsilon .$$
\end{defn}
Note that the second definition extends the notion to mechanisms that are only defined for inputs in $\X^n$. It is also potentially weaker than the first condition. All the properties will hold under this weaker definition. It is easy to see that KL stability under replacement of a single element implies ALKL stability. Recall, that an algorithm $M$ is $\varepsilon$-KL stable (or, equivalently, $(1,\varepsilon)$-RDP \citep{Mironov17}) if for all $s,s'\in \X^n$ that differ in a single element, $\dkl{M(s)}{M(s')} \leq \varepsilon$ \citep{BassilyNSSSU16}. For every $i\in [n]$, $z\in \X^{n-1}$ and a fixed $x_0\in \X$, we can define $z \circ_i x_0$ as the vector $s\in \X^n$ such that $s_{-i} =z$ and $s_i=x_0$. Note that for every $i$ and $s\in \X^n$, $s$ and $s_{-i}\circ_i x_0$ differ in a single element. Therefore by defining $M_i(z) \doteq M(z\circ_i x_0)$, we obtain that for every $i$, $\dkl{M(s)}{M_i(s_{-i})} \leq \varepsilon$ and, in particular, $M$ also has $\varepsilon$-ALKL stability.
\begin{cor}
If an algorithm $M$ is $\varepsilon$-KL stable then it is $\varepsilon$-\KLAS{}.
\end{cor}


The key property of our definition is composition. This lemma allows us to account for the accumulation of information through multiple adaptive queries. The following lemma only considers the composition of two algorithms. Induction allows this to be extended to $k$ algorithms.

\begin{lem}[Composition]\label{lem:composition}
Suppose $M : \left(\X^n \cup \X^{n-1} \right) \to \mathcal{Y}$ is $\varepsilon$-\KLAS{} and $M' : \mathcal{Y} \times \left(\X^n \cup \X^{n-1} \right) \to \mathcal{Z}$ is such that $M'(y,\cdot) : \left(\X^n \cup \X^{n-1} \right) \to \mathcal{Z}$ is $\varepsilon'$-\KLAS{} for all $y \in \mathcal{Y}$. Then the composition $s \mapsto M'(M(s),s)$ is $(\varepsilon+\varepsilon')$-\KLAS{}.
\end{lem}
\begin{proof}
Let $M_i$ and $M_i'$ be the algorithms whose existence is assumed by Definition \ref{defn:klas}. Fix $s \in \mathcal{X}^n$. By the chain rule for KL divergence (Lemma \ref{lem:KLchain}),
\begin{align*}
&\frac{1}{n} \sum_{i \in [n]} \dkl{M'(M(s),s)}{M_i'(M_i(s_{-i}),s_{-i})}\\
&\leq \frac{1}{n} \sum_{i \in [n]} \left(\dkl{M(s)}{M_i(s_{-i})} + \ex{y \sim M(s)}{\dkl{M'(y,s)}{M_i'(y,s_{-i})}}\right)\\
&= \frac{1}{n} \sum_{i \in [n]} \dkl{M(s)}{M_i(s_{-i})} + \ex{y \sim M(s)}{ \frac{1}{n} \sum_{i \in [n]}\dkl{M'(y,s)}{M_i'(y,s_{-i})}}\\
&\leq \varepsilon + \varepsilon'.
\end{align*}
\end{proof}

Another key property of our definition of average leave-one-out KL stability is \emph{postprocessing}. That is, if $M$ is $\varepsilon$-\KLAS{}, then applying an arbitrary function to the output of $M$ continues to be $\varepsilon$-\KLAS{}. This can be seen by taking $\varepsilon'=0$ in the above composition lemma or by using the data processing inequality for KL divergence \cite[Theorem 1]{VanErwenH14}.

\subsection{Mutual Information}
In order to show that our notion of average leave-one-out KL stability implies generalization, we first show that it implies a bound on mutual information:

\begin{prop} \label{prop:mi}
Let $M : \X^n \to \mathcal{Y}$ be $\varepsilon$-\KLAS{}. Let $S \in \X^n$ be a product distribution. Then $I(M(S);S) \leq \varepsilon \cdot n$.
\end{prop}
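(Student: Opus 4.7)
The plan is to decompose the proof into the two lemmas mentioned in the introduction: first show $\varepsilon$-ALKL stable implies $\varepsilon$-MI stable, and then show MI stability implies the desired mutual information bound when $S$ is a product distribution.

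For the first step, I would fix $i \in [n]$ and a realization $s_{-i}$ of $S_{-i}$, and rewrite the conditional mutual information as
\[
I(M(S); S_i \mid S_{-i}=s_{-i}) = \ex{S_i \mid S_{-i}=s_{-i}}{\dkl{M(S_i, s_{-i})}{\overline{M}(s_{-i})}},
\]
where $\overline{M}(s_{-i}) = \ex{S_i \mid S_{-i}=s_{-i}}{M(S_i, s_{-i})}$ is the average output distribution. Now invoke the ``mean-as-minimizer'' property (Lemma \ref{lem:klmin}): the distribution $\overline{M}(s_{-i})$ minimizes the average KL divergence to the family $\{M(s_i, s_{-i})\}_{s_i}$, so I can upper bound by replacing it with the leave-one-out distribution $M_i(s_{-i})$ from Definition \ref{defn:klas}. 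Taking expectation over $S_{-i}$ and averaging over $i \in [n]$ yields
\[
\frac{1}{n}\sum_{i=1}^n I(M(S); S_i \mid S_{-i}) \le \ex{S}{\frac{1}{n}\sum_{i=1}^n \dkl{M(S)}{M_i(S_{-i})}} \le \varepsilon,
\]
which is MI stability applied to the (possibly non-product) distribution of $S$.

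For the second step, I would apply the chain rule (Lemma \ref{lem:MIchain}) in the ``forward'' ordering:
\[
I(M(S); S) = \sum_{i=1}^n I(M(S); S_i \mid S_1, \ldots, S_{i-1}).
\]
Now I compare each term with $I(M(S); S_i \mid S_{-i})$. Since $S$ is a product distribution, $S_i$ is independent of $S_{-i}$, so $H(S_i \mid S_{-i}) = H(S_i) = H(S_i \mid S_1,\ldots,S_{i-1})$. Moreover, $\{S_1,\ldots,S_{i-1}\} \subseteq S_{-i}$, so conditioning on more variables reduces entropy: $H(S_i \mid S_{-i}, M(S)) \leq H(S_i \mid S_1,\ldots,S_{i-1}, M(S))$. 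Combining these gives $I(M(S); S_i \mid S_1,\ldots,S_{i-1}) \le I(M(S); S_i \mid S_{-i})$ for each $i$.

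Summing and applying the MI stability bound from Step 1 yields
\[
I(M(S); S) \le \sum_{i=1}^n I(M(S); S_i \mid S_{-i}) \le n\varepsilon,
\]
which is the claim. The only slightly subtle point is the ``mean-as-minimizer'' step, but this is exactly Lemma \ref{lem:klmin} applied to the family of conditional output distributions; the rest is standard manipulation of entropies that crucially uses the product structure of $S$ to pass from forward chain rule conditioning to leave-one-out conditioning.
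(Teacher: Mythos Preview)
Your proposal is correct and follows the same two-lemma decomposition as the paper (Lemmas \ref{lem:klas-mis} and \ref{lem:mis-mi}): ALKL stability $\Rightarrow$ MI stability $\Rightarrow$ mutual information bound. Step 1 is identical to the paper's argument.

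In Step 2 there is a minor technical difference worth noting. The paper establishes $I(M(S);S_i\mid S_{<i}) \le I(M(S);S_i\mid S_{-i})$ by writing both sides as expected KL divergences and invoking convexity of KL (Lemma \ref{lem:klconv}) to push the expectation over $S_{>i}$ outside. You instead obtain the same inequality via the entropy identity $I = H(S_i\mid \cdot) - H(S_i\mid \cdot, M(S))$ together with ``conditioning reduces entropy.'' Your route is arguably cleaner, but as written it tacitly assumes the conditional entropies of $S_i$ are finite (fine for discrete $\mathcal{X}$, less obvious in general). This is easily patched: the inequality follows directly from the chain rule for mutual information and nonnegativity of $I(S_{>i};S_i\mid S_{<i},M(S))$, together with $I(S_{>i};S_i\mid S_{<i})=0$ by independence, without ever mentioning entropies. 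With that adjustment, both arguments are equivalent in strength.
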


To prove Proposition \ref{prop:mi}, we introduce an intermediate notion of stability that is based on that of \citetall{RaginskyRTWX16}. Specifically, mutual information stability is defined as follows.
\begin{defn}[Restating Definition~\ref{def:mis}]
A randomized algorithm $M : \X^n \to \mathcal{Y}$ is $\varepsilon$-\MIS{} if, for any random variable $S$ distributed over $\X^n$ (including non-product distributions), $$\frac{1}{n} \sum_{i=1}^n I(M(S);S_i|S_{-i}) \leq \varepsilon.$$
\end{defn}
We show that mutual information stability has the following properties.
\begin{enumerate}
\item Average leave-one-out KL stability implies mutual information stability.
\item Mutual information stability implies a mutual information bound.
\item Mutual information stability composes adaptively.
\end{enumerate}
 Combining properties (1) and (2) yields Proposition \ref{prop:mi}. The adaptive composition property of mutual information stability implies that it might be useful for analysis of adaptive procedures which are not \KLAS~(although we do not use this property since ALKL stability itself composes adaptively).

\begin{lem} \label{lem:klas-mis}
If $M : \X^n \to \mathcal{Y}$ is $\varepsilon$-\KLAS{}, then it also is $\varepsilon$-\MIS{}.
\end{lem}
\begin{proof}
Let $S$ be a random variable distributed according to some distribution $\cP$ on $\X^n$. Let $\cP_i$ and $\cP_{-i}$ denote the marginal distribution of $S_i$ and $S_{-i}$, respectively. For $z \in \X^{n-1}$ we use $\cP(s_i|z)$ to denote the distribution of $S_i$ conditioned on $S_{-i} = z$. Now, by the definition of (conditional) mutual information,
\begin{align*}
\frac{1}{n} \sum_{i=1}^n I(M(S);S_i|S_{-i}) =& \frac{1}{n} \sum_{i=1}^n \ex{z \sim \cP_{-i}}{I(M(S)|S_{-i}=z;S_i|S_{-i}=z)}\\
=& \frac{1}{n} \sum_{i=1}^n \ex{z \sim \cP_{-i}}{\ex{x \sim \cP(s_i|z)}{\dkl{M(S)|S_{-i}=z,S_i=x}{M(S)|S_{-i}=z}}} \\
=& \frac{1}{n} \sum_{i=1}^n \ex{z \sim \cP_{-i}}{\ex{x \sim \cP(s_i|z)}{\dkl{M(z\circ_i x)}{M(S)|S_{-i}=z}}}.
\end{align*}
Here $z \circ_i x$ refers to the vector $s\in \X^n$ such that $s_{-i} =z$ and $s_i=x$. Here the inner expectation is over $x$ drawn from the distribution of $S_i$ conditioned on $S_{-i} = z$ --- of the KL divergence from the distribution of $M(z\circ_i x)$ to the distribution of $M(S)$ conditioned on $S_{-i}=z$. The latter distribution is exactly the convex combination of the distribution of $M(z\circ_i x)$ weighted by the distribution of $x \sim \cP(s_i|z)$.

Now the key observation: the convex combination --- $M(S)|S_{-i}=z$ --- is the distribution that minimizes the inner expectation. Hence, we can replace it by $M_i(z)$ and only increase the expression.  Formally, by Lemma \ref{lem:klmin}, for all $z\in \X^{n-1}$, $$\ex{x\sim \cP(s_i|z)}{\dkl{M(z\circ_i x)}{M(S)|S_{-i}=z)}} \leq \ex{x\sim \cP(s_i|z)}{\dkl{M(z\circ_i x)}{M_i(z)}}$$
Note that $M_i(z)$ refers either to execution of $M$ itself or (if $M$ is not defined over inputs of length $n-1$) to the algorithms whose existence is promised by the second half of Definition \ref{defn:klas}. The result now follows, as we have established that
\begin{align*}
\frac{1}{n} \sum_{i=1}^n I(M(S);S_i|S_{-i}) \leq& \frac{1}{n} \sum_{i=1}^n \ex{z \sim \cP_{-i}}{\ex{x \sim \cP(s_i|z)}{\dkl{M(z\circ_i x)}{M_i(z)}}}\\
=&  \frac{1}{n} \sum_{i=1}^n \ex{s \sim \cP}{\dkl{M(s)}{M_i(s_{-i})}}.
\end{align*}
\end{proof}

\begin{lem} \label{lem:mis-mi}
Suppose $M : \X^n \to \mathcal{Y}$ is $\varepsilon$-\MIS{}. Let $\cP$ be a distribution over $\X$ and $S$ be distributed according to $\cP^n$. Then $I(M(S);S) \leq \varepsilon n$.
\end{lem}
\begin{proof} 
Denote $S_{<i}=(S_1,\cdots, S_{i-1})$, $S_{>i}=(S_{i+1},\cdots, S_n)$ and $S_{\leq i}=(S_1,\cdots, S_{i})$. By the chain rule for mutual information (Lemma \ref{lem:MIchain} and induction), $$I(M(S);S) = \sum_{i=1}^n I(M(S);S_i|S_{<i}).$$
By the definition of (conditional) mutual information,
\alequ{ I(M(S);S_i|S_{<i}) &= \ex{z \sim \cP^{i-1}}{I(M(S)|S_{<i}=z;S_i|S_{< i} = z)} \nonumber\\
& = \ex{z \sim \cP^{i-1}}{\ex{x\sim \cP}{\dkl{M(S)|S_{\leq i}=z\circ x}{M(S)|S_{< i}=z}}} \label{eq:mi2kl}
}
Here $z \circ x  \in \X^i$ denotes the concatenation of $z \in \X^{i-1}$ with $x \in \X$.
Now, by the convexity of KL divergence (Lemma \ref{lem:klconv}), we can move the randomness of $S_{>i}$ from the divergence and into the expectation.  Namely,
$$\dkl{M(S)|S_{\leq i}=z\circ x}{M(S)|S_{<i}=z} \leq \ex{z' \sim \cP^{n-i}}{\dkl{M(S)|S=z\circ x \circ z'}{M(S)|S_{-i}=z\circ z'}}.$$
Here we use the fact that $M(S)|S_{\leq i}=z\circ x$ and $M(S)|S_{<i}=z$ are convex combinations of the distribution of $M(s)$ weighted by $S_{>i}$ (note that independence is crucial here).
Plugging this into eq.~\eqref{eq:mi2kl} and using the definition of (conditional) mutual information we get
\alequn{ I(M(S);S_i|S_{<i}) &\leq \ex{z \sim \cP^{i-1}}{\ex{x\sim \cP}{\ex{z' \sim \cP^{n-i}}{\dkl{M(S)|S=z\circ x \circ z'}{M(S)|S_{-i}=z\circ z'}}}} \\
& = \ex{s \sim \cP}{\dkl{M(S)|S=s}{M(S)|S_{-i}=s_{-i}}} \\
& = I(M(S);S_i|S_{-i})}

Combining these (in)equalities yields the result: $$I(M(S);S) \leq \sum_{i=1}^n I(M(S);S_i|S_{-i}).$$
\end{proof}

\begin{lem}\label{lem:mis-comp}
Suppose $M : \X^n \to \mathcal{Y}$ is $\varepsilon$-\MIS{} and $M' : \mathcal{Y} \times \X^n \to \mathcal{Z}$ is such that $M'(y,\cdot) : \X^n \to \mathcal{Z}$ is $\varepsilon'$-\MIS{} for all $y \in \mathcal{Y}$. Then the composition $s \mapsto M'(M(s),s)$ is $(\varepsilon+\varepsilon')$-\MIS{}.
\end{lem}
\begin{proof}
Let $S$ be a random variable on $\X^n$ and let $\cP_{M(S)}$ denote the probability distribution of $M(S)$. By the chain rule,
\begin{align*}
\frac{1}{n} \sum_{i=1}^n I(M'(M(S),S);S_i|S_{-i})\leq & \frac{1}{n} \sum_{i=1}^n I(M(S);S_i|S_{-i}) + I(M'(M(S),S);S_i|S_{-i},M(S))\\
=& \frac{1}{n} \sum_{i=1}^n I(M(S);S_i|S_{-i}) + \ex{y \sim \cP_{M(S)}}{I(M'(y,S);S_i|S_{-i},M(S)=y)}\\
=& \frac{1}{n} \sum_{i=1}^n I(M(S);S_i|S_{-i}) + \ex{y \sim \cP_{M(S)}}{\frac{1}{n} \sum_{i=1}^n I(M'(y,S);S_i|S_{-i},M(S)=y)}\\
\leq& \varepsilon + \varepsilon'.
\end{align*}
The key is that the stability property holds for all distributions, which means it holds for the distribution of $S$ conditioned on $M(S)$. Note that if we defined mutual information stability only to quantify over product distributions, then this proof would not carry through, as $S$ conditioned on $M(S)$ is not necessarily a product distribution anymore.
\end{proof}

\begin{rem}
A natural question to ask is whether instead of using stability notions we can directly use mutual information for our analysis. Specifically, one could prove a bound on the mutual information of adding calibrated noise and then use composition properties of mutual information to bound the error of the entire algorithm for answering adaptive queries. For this approach to work one needs to prove a bound on the mutual information of adding calibrated noise for arbitrary input distributions (or at least for all distributions that might result from conditioning on the previous answers to queries). However, it is not hard to see that for non-product distributions over $S$ mutual information can be much larger than the bounds we will get via stability. (For example if the distribution on $S$ is such that the answer to the query is $0$ with probability $1/2$ and $1$ with probability $1/2$ then adding noise with variance $1$ will reveal some positive constant amount of information. At the same time this algorithm is $1/n^2$-KL stable so in our approach will contribute only $1/n$ to the final bound on mutual information.) As a result, this simpler approach is unlikely to lead to useful generalization bounds.
\end{rem}

\subsection{Generalization in expectation}
In this section we translate an upper bound on $I(S;M(S))$ into an upper bound on the expectation of the generalization error. As in earlier work  \citep{RussoZ16}, our main technical tool is Corollary~\ref{cor:MGF-KL}. However we deal with more general random variables (not just subgaussian) and also prove bounds that are scaled to standard deviation of the random variable as opposed to the subgaussian constant. In Section.~\ref{sec:events} we describe an alternative approach to generalization which is based on bounding the probability of any ``bad'' event.

The following proposition bounds the expected generalization error.
\begin{prop}\label{prop:gen}
Let $M$ be a randomized algorithm with input from $\mathcal{X}^n$ and output in $\mathcal{Q}$, where $\mathcal{Q}$ is the set of functions $\psi : \mathcal{X} \to [0,1]$. Let $\cP$ be a distribution on $\mathcal{X}$ and $S \sim \cP^n$. Let $\tau>0$. Suppose $I(S;M(S)) \leq \varepsilon n$. Then $$\ex{S \sim \cP^n \atop \psi \sim M(S)}{\frac{S[\psi]-\cP[\psi]}{\mymax{\sdv(\psi(\cP))}{\tau}}} \leq \left\{ \begin{array}{cl} 2\sqrt{\varepsilon} & \text{ if } \sqrt\varepsilon \leq \tau \\ \varepsilon/\tau + \tau& \text{ if } \sqrt\varepsilon \geq \tau\end{array} \right\} \leq 2\sqrt\varepsilon + \varepsilon/\tau.$$
\end{prop}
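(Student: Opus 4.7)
The plan is to follow the template of Lemma~\ref{lem:MGF-KL-main} (equivalently Corollary~\ref{cor:MGF-KL}): introduce a ``tilted'' random variable $Y$ that has the same marginal distribution over $\psi$ as $X$ but is independent of $S$, bound $\dkl{X}{Y}$ via the mutual information hypothesis, and bound the moment generating function of $Y$ using the fact that, conditioned on $\psi$, the summands $\psi(S_i)$ are i.i.d.~with variance controlled by the denominator.

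First I would define $X = \frac{S[\psi]-\cP[\psi]}{\mymax{\sdv(\psi(\cP))}{\tau}}$ where $S\sim \cP^n$ and $\psi \sim M(S)$, and $Y = \frac{S[\psi]-\cP[\psi]}{\mymax{\sdv(\psi(\cP))}{\tau}}$ where $S,S' \sim \cP^n$ are independent and $\psi \sim M(S')$. Since $(S,\psi) \mapsto X$ and $(S,\psi) \mapsto Y$ are the same deterministic map applied to $(S,M(S))$ and $(S,M(S'))$ respectively, the data processing inequality yields $\dkl{X}{Y} \leq \dkl{(S,M(S))}{(S,M(S'))} = I(S;M(S)) \leq \varepsilon n$. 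Then Corollary~\ref{cor:MGF-KL} gives, for every $\lambda>0$,
\begin{equation}
\ex{}{X} \leq \frac{1}{\lambda}\left(\varepsilon n + \ln \ex{}{e^{\lambda Y}}\right).
\label{eqn:plan-kl-mgf}
\end{equation}

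Next I would bound $\ex{}{e^{\lambda Y}}$. Conditioning on $\psi$ (which is independent of $S$ under $Y$), we can write $Y = \frac{1}{n}\sum_{i=1}^n Y_i$ with $Y_i = \frac{\psi(S_i)-\cP[\psi]}{\mymax{\sdv(\psi(\cP))}{\tau}}$, so the $Y_i$ are i.i.d., have mean $0$, variance at most $1$ (since we normalize by at least $\sdv(\psi(\cP))$), and satisfy $|Y_i|\leq 1/\tau$ (since $\psi$ takes values in $[0,1]$ and we normalize by at least $\tau$). The usual Bernstein-style expansion $\ex{}{Y_i^k}\leq (1/\tau)^{k-2}\ex{}{Y_i^2}$ for $k\geq 2$ then yields
\[
\ex{}{e^{(\lambda/n)Y_i}} \leq \exp\!\left(\tau^2(e^{\lambda/(n\tau)}-1-\lambda/(n\tau))\right),
\]
and for $\lambda \leq n\tau$ we have $e^{\lambda/(n\tau)}-1-\lambda/(n\tau) \leq (\lambda/(n\tau))^2$, giving $\ex{}{e^{(\lambda/n)Y_i}}\leq e^{\lambda^2/n^2}$. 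Taking the product over $i\in[n]$ and then the expectation over $\psi$ yields $\ex{}{e^{\lambda Y}}\leq e^{\lambda^2/n}$ for all $0<\lambda\leq n\tau$.

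Finally I would substitute the MGF bound into \eqref{eqn:plan-kl-mgf} to obtain $\ex{}{X}\leq \frac{\varepsilon n}{\lambda} + \frac{\lambda}{n}$ for any $\lambda \in (0,n\tau]$, and optimize over $\lambda$: the unconstrained minimum is at $\lambda = n\sqrt{\varepsilon}$ with value $2\sqrt{\varepsilon}$, which is feasible iff $\sqrt{\varepsilon}\leq \tau$; otherwise the minimum on the feasible interval is attained at $\lambda = n\tau$, giving $\varepsilon/\tau + \tau$. This yields the case split in the statement, and the final inequality $\mymax{2\sqrt\varepsilon}{\varepsilon/\tau+\tau}\leq 2\sqrt\varepsilon + \varepsilon/\tau$ is immediate. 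The main obstacle is the Bernstein-type MGF calculation — every other step is a direct invocation of a cited lemma — and the critical point is that using the standard subgaussian MGF bound (which ignores the variance) would only give $\sqrt{\varepsilon}$ scaling with the \emph{range} rather than the \emph{standard deviation}, so the variance-aware Bernstein bound truncated at $\lambda = n\tau$ is essential.
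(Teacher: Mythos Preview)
Your proposal is correct and follows essentially the same approach as the paper: define $X$ and the decoupled $Y$, bound $\dkl{X}{Y}$ by $I(S;M(S))\le \varepsilon n$ via data processing, invoke Corollary~\ref{cor:MGF-KL}, establish the Bernstein-type MGF bound $\ex{}{e^{\lambda Y}}\le e^{\lambda^2/n}$ for $\lambda\le n\tau$ (the paper isolates this as Lemma~\ref{lem:BernsteinMGF}), and optimize over $\lambda\in(0,n\tau]$. The only cosmetic difference is that you write out the Bernstein expansion inline rather than citing the separate lemma; your final ``$\max$'' phrasing is a slight abuse (the bound is case-by-case, not the max), but the verification in each case is exactly as you indicate.
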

\begin{proof}
Define a random variable $X=\frac{S[\psi]-\cP[\psi]}{\mymax{\sdv(\psi(\cP))}{\tau}}$ for $S \sim \cP^n$ and $\psi \sim M(S)$. Our goal is to bound $\ex{}{X}$. Let $Y=\frac{S[\psi]-\cP[\psi]}{\mymax{\sdv(\psi(\cP))}{\tau}}$ for $(S,S') \sim \cP^n \times \cP^n$ and $\psi \sim M(S')$. That is, $Y$ is $X$ altered so that the query $\psi$ is independent of the data $S$, but has the same marginal distribution. Since $I(S;M(S)) = \dkl{S,M(S)}{S,M(S')} \leq \varepsilon n$, we have $\dkl{X}{Y} \leq \varepsilon n$ by the data processing inequality. By Corollary \ref{cor:MGF-KL}, \begin{equation}\ex{}{X} \leq \inf_{\lambda>0} \frac{1}{\lambda} \left( \dkl{X}{Y} + \ln \ex{}{e^{\lambda Y}} \right).\label{eqn:kl-mgf1}\end{equation}
Thus it only remains to bound $\ex{}{e^{\lambda Y}}$. We have
\begin{align*}
\ex{}{e^{\lambda Y}}
&= \ex{(S,S') \sim \cP^n \times \cP^n \atop \psi \sim M(S')}{\exp\left(\frac{\lambda}{n} \sum_{i =1}^{n} \frac{\psi(S_i)-\cP[\psi]}{\mymax{\sdv(\psi(\cP))}{\tau}}\right)}\\
&= \ex{S' \sim \cP^n \atop \psi \sim M(S')}{\prod_{i=1}^n \ex{S_i \sim \cP }{\exp\left(\frac{\lambda}{n} \frac{\psi(S_i)-\cP[\psi]}{\mymax{\sdv(\psi(\cP))}{\tau}}\right)}}.
\end{align*}
Thus it suffices to bound $\ex{S_i \sim \cP }{\exp\left(\frac{\lambda}{n} \frac{\psi(S_i)-\cP[\psi]}{\mymax{\sdv(\psi(\cP))}{\tau}}\right)}$ for a fixed $i$ and a fixed $\psi$. The random variable $Y_i=\frac{\psi(S_i)-\cP[\psi]}{\mymax{\sdv(\psi(\cP))}{\tau}}$ has mean $0$ and variance at most $1$ (since $S_i \sim \cP$). Also $|Y_i| \leq 1/\tau$. Thus by Lemma \ref{lem:BernsteinMGF} (stated below), we have $\ex{}{e^{\frac{\lambda}{n} Y_i}} \leq e^{\frac{\lambda^2}{n^2}}$ for all $\lambda/n\leq \tau$. Hence $\ex{}{e^{\lambda Y}} \leq e^{\lambda^2/n}$ for all $\lambda \leq n\tau$. Plugging this into eq.~\eqref{eqn:kl-mgf1}, we get $$\ex{}{X} \leq \inf_{0 < \lambda \leq n\tau} \frac{\varepsilon n}{\lambda}+\frac{\lambda}{n} = \left\{ \begin{array}{cl} 2\sqrt{\varepsilon} & \text{ if } \sqrt\varepsilon \leq \tau \\ \varepsilon/\tau + \tau& \text{ if } \sqrt\varepsilon \geq \tau\end{array} \right\}.$$
\end{proof}
\begin{lem}\label{lem:BernsteinMGF}
Let $Y$ be a random variable supported on $[-1/\tau,1/\tau]$. Suppose $\ex{}{Y}=0$ and $\ex{}{Y^2} \leq 1$. Then for $\lambda \in [0,\tau]$, $\ex{}{e^{\lambda Y}} \leq e^{\lambda^2}$.
\end{lem}
This lemma is similar to the proof of Bernstein's inequality \citep{Bernstein24}.
\begin{proof}
Since $|Y| \leq 1/\tau$, we have $|Y|^k \leq (1/\tau)^{k-2} Y^2$ for all $k \geq 2$. Thus, for all $\lambda \geq 0$, we have
\alequn{
\ex{}{e^{\lambda Y}} &= 1 + \lambda \ex{}{Y} + \sum_{k=2}^\infty \frac{\lambda^k}{k!} \ex{}{Y^k} \leq 1 + \sum_{k=2}^\infty \frac{\lambda^k}{k!} (1/\tau)^{k-2} \ex{}{Y^2} \\ &= 1 +  \left(e^{\lambda/\tau} - 1 -\lambda/\tau\right)\tau^2\ex{}{Y^2} \leq e^{\left(e^{\lambda/\tau}-1-\lambda/\tau\right) \cdot \tau^2}.}
If $\lambda/\tau \leq 1$, then $e^{\lambda/\tau} - 1-\lambda/\tau \leq (\lambda/\tau)^2$, which yields the result.
\end{proof}

Proposition \ref{prop:gen} gives a bound in terms of variance. Using the PAC-Bayes framework, we can also attain an additive-multiplicative bound:
\begin{prop}[\citep{McAllester13} Theorem 3]\footnote{Note that \citet{McAllester13} does not state this bound in terms of mutual information, but in terms of an equivalent KL divergence. Also, this result implies a two-sided bound by considering $1-\psi$.}Let $M$ be a randomized algorithm that takes an input from $\mathcal{X}^n$ and outputs a function $\psi : \mathcal{X} \to [0,1]$. Let $\mathcal{P}$ be a distribution on $\mathcal{X}$ and let $S \sim \mathcal{P}^n$ consist of $n$ i.i.d.~samples therefrom. Fix $\lambda>1/2$. Then $$\ex{S \sim \mathcal{P}^n \atop \psi \sim M(S)}{\mathcal{P}[\psi]} \leq \frac{1}{1-1/2\lambda}\left(\ex{S \sim \mathcal{P}^n \atop \psi \sim M(S)}{S[\psi]} + \frac{\lambda}{n} I(S;M(S))\right)$$
\end{prop}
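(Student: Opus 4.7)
The plan is to combine the PAC--Bayes change of measure (via Lemma \ref{lem:sup}) with a Bennett-style moment generating function bound that exploits $\psi \in [0,1]$. The multiplicative factor $\tfrac{1}{1-1/(2\lambda)}$ will arise from using the variance estimate $\ex{X \sim \cP}{\psi(X)^2} \leq \cP[\psi]$ --- which holds because $\psi \in [0,1]$ --- instead of the coarser range bound $\|\psi\|_\infty \leq 1$ that would only yield a Hoeffding-style additive correction.

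First I would fix a single $\psi : \X \to [0,1]$ with $p = \cP[\psi]$ and prove the one-sided MGF bound $\ex{S \sim \cP^n}{\exp(\tfrac{n}{\lambda}(p - S[\psi]))} \leq \exp(\tfrac{np}{2\lambda^2})$ for every $\lambda > 0$. Since the $S_i$ are i.i.d.\ and $\psi(S_i) \in [0,1]$ has mean $p$, convexity of $y \mapsto e^{-y/\lambda}$ on $[0,1]$ yields $\ex{}{e^{-\psi(S_i)/\lambda}} \leq 1 + p(e^{-1/\lambda} - 1) \leq \exp(p(e^{-1/\lambda}-1))$. Taking the product over $i$ and multiplying by $e^{np/\lambda}$ gives the upper bound $\exp(np(e^{-1/\lambda}-1+1/\lambda))$. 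The elementary inequality $e^{-x}-1+x \leq x^2/2$ for $x \geq 0$ (checked by comparing second derivatives) applied at $x = 1/\lambda$ collapses the exponent to exactly $np/(2\lambda^2)$.

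Next I would apply Lemma \ref{lem:sup} with the ``posterior'' $M(S)$ and the ``prior'' $P_0$ equal to the marginal of $\psi$ under $S \sim \cP^n,\, \psi \sim M(S)$, for the test function $f_S(\psi) = \tfrac{n}{\lambda}(\cP[\psi] - S[\psi]) - \tfrac{n \cP[\psi]}{2\lambda^2}$. For each fixed $S$,
\[
\ex{\psi \sim M(S)}{f_S(\psi)} \;\leq\; \dkl{M(S)}{P_0} + \ln \ex{\psi \sim P_0}{e^{f_S(\psi)}}.
\]
Taking the $S$-expectation, the first term on the right becomes exactly $I(S;M(S))$ by the definition of mutual information. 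For the second term, Jensen's inequality (concavity of $\ln$) followed by Fubini --- legitimate because $\psi \sim P_0$ is independent of $S$ in the prior --- bounds it by $\ln \ex{\psi \sim P_0}{\ex{S}{e^{f_S(\psi)}}}$. For each fixed $\psi$, the inner expectation equals $e^{-np/(2\lambda^2)} \cdot \ex{S}{e^{\tfrac{n}{\lambda}(p - S[\psi])}}$, which by the first step is at most $1$; hence the entire second term is $\leq \ln 1 = 0$.

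It remains to rearrange. The left-hand side after taking $\ex{S}$ equals $\tfrac{n}{\lambda}\bigl((1-\tfrac{1}{2\lambda})\,\ex{}{\cP[\psi]} - \ex{}{S[\psi]}\bigr)$, so dividing by $n/\lambda > 0$ gives $(1-\tfrac{1}{2\lambda})\,\ex{}{\cP[\psi]} \leq \ex{}{S[\psi]} + \tfrac{\lambda}{n} I(S;M(S))$; the hypothesis $\lambda > 1/2$ is invoked only at the very end to ensure $1-\tfrac{1}{2\lambda} > 0$ so that we may divide by it. The one subtle point is calibrating the shift $-n\cP[\psi]/(2\lambda^2)$ in $f_S$ so that it exactly cancels the Bennett exponent $np/(2\lambda^2)$; this cancellation is what drives the $\psi$-prior MGF down to $1$ and converts what would otherwise be an additive tail into the clean multiplicative factor $1/(1-1/(2\lambda))$.
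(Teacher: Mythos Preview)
Your proof is correct. The paper does not supply its own proof of this proposition---it is simply cited as \citep{McAllester13} Theorem 3---so there is no in-paper argument to compare against. Your argument is the standard PAC--Bayes derivation specialized to the mutual-information formulation, and it uses exactly the change-of-measure tool the paper already provides (Lemma~\ref{lem:sup}/Corollary~\ref{cor:MGF-KL}). The MGF step via the chord bound $e^{-y/\lambda}\le 1+y(e^{-1/\lambda}-1)$ for $y\in[0,1]$, the inequality $e^{-x}-1+x\le x^2/2$ for $x\ge 0$, the identification $\ex{S}{\dkl{M(S)}{P_0}}=I(S;M(S))$ when $P_0$ is the marginal of $M(S)$, and the Jensen--Fubini handling of the log-partition term are all valid; the final rearrangement and the use of $\lambda>1/2$ to ensure $1-1/(2\lambda)>0$ are fine.
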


To analyse our algorithm, we also need to bound the empirical error in terms of the standard deviation. Note that the empirical error -- the noise we add -- scales with the empirical standard deviation. Thus we must bound the empirical variance in terms of the true variance:
\begin{prop}\label{prop:gen-emp}
Let $M$ be a randomized algorithm with input from $\mathcal{X}^n$ and output in $\mathcal{Q}$, where $\mathcal{Q}$ is the set of functions $\psi : \mathcal{X} \to [0,1]$. Let $\cP$ be a distribution on $\mathcal{X}$ and $S \sim \cP^n$. Let $\tau>0$. Suppose $I(S;M(S)) \leq \varepsilon n$. Then $$\ex{S \sim \cP \atop \psi \sim M(S)}{\left(\frac{\sdv(\psi(S))}{\mymax{\sdv(\psi(\cP))}{\tau}}\right)^2} = \ex{S \sim \cP \atop \psi \sim M(S)}{\left(\frac{\sqrt{\frac{1}{n} \sum_{i=1}^n (\psi(S_i) - S[\psi])^2}}{\mymax{\sdv(\psi(\cP))}{\tau}}\right)^2} \leq 2+\varepsilon/\tau^2.$$
\end{prop}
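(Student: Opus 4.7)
The plan is to mimic the proof of Proposition~\ref{prop:gen}, with one preliminary reduction tailored to the empirical variance.

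First, I would replace the empirical variance around the empirical mean with the easier-to-analyze deviation from the \emph{true} mean. Since the empirical mean is the minimizer of $c \mapsto \frac{1}{n}\sum_i (\psi(S_i)-c)^2$, we have the pointwise bound
\[
\sdv(\psi(S))^2 = \frac{1}{n}\sum_{i=1}^n \left(\psi(S_i) - S[\psi]\right)^2 \leq \frac{1}{n}\sum_{i=1}^n \left(\psi(S_i) - \cP[\psi]\right)^2.
\]
Dividing through by $\mymax{\sdv(\psi(\cP))^2}{\tau^2}$, it suffices to bound the expectation of $X := \frac{1}{n}\sum_{i=1}^n Z_i$, where $Z_i := (\psi(S_i)-\cP[\psi])^2/\mymax{\sdv(\psi(\cP))^2}{\tau^2}$. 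Note $Z_i \in [0, 1/\tau^2]$ because $\psi$ takes values in $[0,1]$.

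Next, I would introduce the decoupled version: let $Y$ be defined identically but with $(S,\psi) \sim \cP^n \times \cP_\psi$, where $\cP_\psi$ is the marginal of $M(S)$, so that $S$ and $\psi$ are independent with the correct marginals. Since $X$ and $Y$ are deterministic functions of $(S,\psi)$ and $(S'',\psi')$ respectively, the data processing inequality yields $\dkl{X}{Y} \leq I(S;M(S)) \leq \varepsilon n$. Applying Corollary~\ref{cor:MGF-KL} gives, for every $\lambda > 0$,
\[
\ex{}{X} \leq \frac{1}{\lambda}\bigl(\varepsilon n + \ln \ex{}{e^{\lambda Y}}\bigr).
\]

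The main technical step is controlling the MGF of $Y$. Conditioning on $\psi'$, the summands in $Y$ are i.i.d.\ nonnegative and bounded in $[0,1/\tau^2]$ with conditional mean $\sdv(\psi'(\cP))^2/\mymax{\sdv(\psi'(\cP))^2}{\tau^2} \leq 1$. By convexity, any random variable $W \in [0,b]$ with $\ex{}{W} \leq c$ satisfies $\ex{}{e^{u W}} \leq 1 + \frac{e^{ub}-1}{b}\cdot c \leq \exp\bigl(c(e^{ub}-1)/b\bigr)$; applying this with $u = \lambda/n$, $b = 1/\tau^2$, $c = 1$ and then multiplying over the $n$ independent coordinates gives
\[
\ex{}{e^{\lambda Y}} \leq \exp\!\left(n\tau^2\bigl(e^{\lambda/(n\tau^2)}-1\bigr)\right).
\]
Plugging this in and choosing $\lambda = n\tau^2$ (so that $e^{\lambda/(n\tau^2)}-1 = e-1$) yields $\ex{}{X} \leq (e-1) + \varepsilon/\tau^2 \leq 2 + \varepsilon/\tau^2$, as desired.

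The main obstacle is choosing the right MGF bound: a subgaussian-type estimate is too weak for the heavy-tailed squared variables, while a Bernstein-type bound using $\ex{}{Z_i^2}$ would force us to invoke $\ex{}{\psi(S_i)^4}$, which is awkward. Using the convexity bound above, which only needs $\ex{}{Z_i} \leq 1$ and the uniform upper bound $Z_i \leq 1/\tau^2$, keeps the argument clean and produces the constant $2$ with $\lambda = n\tau^2$.
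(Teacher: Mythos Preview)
Your proposal is correct and follows essentially the same route as the paper's proof: the same reduction from empirical to true mean, the same decoupling via Corollary~\ref{cor:MGF-KL} and data processing, and the same convexity-based MGF bound on each bounded nonnegative summand. The only cosmetic differences are that the paper works with the sum rather than the average and packages the MGF estimate as $e^{u}\le 1+2u$ for $u\in[0,1]$ (their \eqref{eqn:BernsteinMGF-first}) instead of your chord bound, arriving at the constant $2$ directly rather than via $e-1\le 2$.
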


To prove Proposition \ref{prop:gen-emp} we make use of the following two standard facts.

Let $\psi : \mathcal{X} \to \mathbb{R}$. Let $\cP$ be a distribution on $\mathcal{X}$ and $S \in \mathcal{X}^n$. Then
\begin{equation}\frac{1}{n} \sum_{i=1}^n (\psi(S_i) - S[\psi])^2 \leq \frac{1}{n} \sum_{i=1}^n\left(\psi(S_i)-\cP[\psi]\right)^2.\label{eqn:muddle}\end{equation}

Let $X$ be a random variable supported on $[0,1]$. Suppose $\ex{}{X}=\sigma$. Then for $s \in [0,1]$, \begin{equation}\ex{}{e^{sX}} \leq \ex{}{1+2sX}=1+2\sigma s \leq e^{2 \sigma s}.\label{eqn:BernsteinMGF-first}\end{equation}

\begin{proof}[Proof of Proposition \ref{prop:gen-emp}]
By \eqref{eqn:muddle},
$$\ex{S \sim \cP \atop \psi \sim M(S)}{\left(\frac{\sqrt{\frac{1}{n} \sum_{i=1}^n (\psi(S_i) - S[\psi])^2}}{\mymax{\sdv(\psi(\cP))}{\tau}}\right)^2} \leq \ex{S \sim \cP \atop \psi \sim M(S)}{\frac{\frac{1}{n} \sum_{i=1}^n (\psi(S_i) - \cP[\psi])^2}{\mymax{\var{}{\psi(\cP)}}{\tau^2}}}.$$
Define a random variable $X=\sum_{i=1}^n\frac{\left(\psi(S_i) - \cP[\psi]\right)^2}{\mymax{\var{}{\psi(\cP)}}{\tau^2}}$ for $S \sim \cP^n$ and $\psi \sim M(S)$. Our goal is thus to bound $\frac{1}{n}\ex{}{X}$. Define another random variable $Y=\sum_{i=1}^n\frac{\left(\psi(S_i) - \cP[\psi]\right)^2}{\mymax{\var{}{\psi(\cP)}}{\tau^2}}$ for $(S,S') \sim \cP^n \times \cP^n$ and $\psi \sim M(S')$. That is, $Y$ is defined for $S$ and $\psi$ being independent, whereas $X$ has them being dependent through $M$. By the data processing inequality, $\dkl{X}{Y} \leq \dkl{S,M(S)}{S,M(S')} = I(S;M(S)) \leq \varepsilon n$.
Now, by Corollary \ref{cor:MGF-KL},
\begin{equation}\ex{}{X} \leq \inf_{\lambda>0} \frac{1}{\lambda} \left( \dkl{X}{Y} + \ln \ex{}{e^{\lambda Y}} \right).\label{eqn:wowie}\end{equation}
To bound $\ex{}{e^{\lambda Y}}$ we note that $Y$ is determined by $S$ and $\psi$. Since these are independent, we may consider an arbitrary fixed $\psi$. We let $Z$ denote $Y$ conditioned on $\psi$ being equal to a fixed $\phi \in \cQ$.  We can write $Z$ as a sum of $n$ independent terms $Z_i=\frac{(\phi(S_i)-\cP[\phi])^2}{\mymax{\var{}{\phi(\cP)}}{\tau^2}}$, and hence $\ex{}{e^{\lambda Z}}=\prod_i^n \ex{}{e^{\lambda Z_i}}$. For each $i$, we have $\ex{}{Z_i} = \frac{\var{}{\phi(\cP)}}{\mymax{\var{}{\phi(\cP)}}{\tau^2}}\leq 1$ and $0 \leq Z_i \leq 1/\tau^2$. Thus, by \eqref{eqn:BernsteinMGF-first} (with $X=Z_i\tau^2$, $\sigma\leq \tau^2$, and $s=\lambda/\tau^2$), $\ex{}{e^{\lambda Z_i}} \leq e^{2\lambda}$ for $\lambda \in [0,\tau^2]$.

This implies that $\ex{}{e^{\lambda Z}} \leq e^{2\lambda n}$ for $\lambda \in [0,\tau^2]$ for every $\phi$ and hence $\ex{}{e^{\lambda Y}} \leq e^{2\lambda n}$ under the same condition. Substituting this into eq.~\eqref{eqn:wowie} yields $$\ex{}{X} \leq \inf_{0<\lambda\leq \tau^2} \frac{1}{\lambda} \left( \varepsilon n + \ln \left(e^{2\lambda n }\right) \right) = \varepsilon n/\tau^2 + 2n.$$
\end{proof}

\subsection{Preservation of low-probability events}
\label{sec:events}
Propositions \ref{prop:gen} and \ref{prop:gen-emp} bound the expected generalization error given a bound on mutual information.
An alternative approach to analysis of generalization is to use a bound on mutual information to upper bound the increase in probability of any ``bad'' event that results from the dependence between the dataset and algorithm's output. Specifically, we prove the following simple lemma:
\begin{lem}\label{lem:prob-gen}
Let $S$ consist of $n$ independent samples from some distribution $\cP$. Let $S'$ be an independent copy of $S$. Let $M : \mathcal{X}^n \to \mathcal{Y}$ and let $E$ be an event on $\mathcal{X}^n \times \mathcal{Y}$ satisfying $$\prob{}{(S',M(S))\in E} \leq \delta.$$ Then $$\prob{}{(S,M(S))\in E} \leq \frac{I(S;M(S)) + \ln 2}{\ln(1/\delta)}.$$
\end{lem}
Intuitively, Lemma \ref{lem:prob-gen} says that if an event happens with very low probability on fresh data, then it happens with somewhat low probability on non-fresh data, as long as the mutual information between the event and the data is low. Note however, that the probability grows from $\delta$ to $\frac{I(S;M(S)) + \ln 2}{\ln(1/\delta)}$. In particular, the inverse of the probability decreases exponentially. For example, Lemma \ref{lem:prob-gen} can be used to correct a $p$-value obtained under the assumption that the data is independent from the choice of the test (since $p$-value is the probability that a test statistic satisfies a chosen condition) \citep{RussoZ16,RogersRST16}.

The same approach to generalization is used in \citep{DworkFHPRR14:arxiv,DworkFHPRR15:arxiv,RogersRST16} for differential privacy and max-information and in \citep{RussoZ16,RogersRST16} for mutual information. The bound implicit in \citep{RussoZ16} is $\prob{}{(S,M(S))\in E} \leq \delta + \sqrt{\frac{I(S;M(S))}{\ln(1/(2\delta))}}$ which is asymptotically worse than our bound. The bound in \citep{RogersRST16} is derived by first using mutual information to bound approximate max-information \citep{DworkFHPRR15:arxiv}. Their approach yields the following bound $$\prob{}{(S,M(S))\in E} \leq \inf_{k \geq 0}\left( 2^k \cdot \delta + \frac{I(S;M(S))+0.54}{k}\right)$$ which is comparable to the bound in Lemma \ref{lem:prob-gen}.

As a more concrete application, we demonstrate how Lemma \ref{lem:prob-gen} can be used to derive a bound on the probability of generalization error being large (or, equivalently, to construct a valid confidence interval for the true expectation of a real-valued function).

Recall the setting of Proposition \ref{prop:gen}. Here $S$ consists of $n$ independent samples from $\cP$ and $M$ outputs a function $\psi : \X \to [0,1]$ and has $I(S;M(S)) \leq \varepsilon n$. By Bernstein's inequality, for $n$ fresh samples $S'$ (independent from $M(S)$), we have $$\prob{(S,S') \sim \cP^n \times \cP^n \atop \psi \sim M(S)}{\frac{S'[\psi]-\cP[\psi]}{\mymax{\sdv(\psi(\cP)}{\tau}} > t} \leq \exp\left(\frac{-t^2 n}{2+\frac{2}{3}\frac{t}{\tau}}\right)$$ for all $t>0$. Thus, by Lemma \ref{lem:prob-gen}, for all $t>0$, $$\prob{S \sim \cP^n \atop \psi \sim M(S)}{\frac{S[\psi]-\cP[\psi]}{\mymax{\sdv(\psi(\cP)}{\tau}} > t} \leq \frac{2+\frac{2}{3}\frac{t}{\tau}}{t^2} \cdot \left( \varepsilon + \frac{\ln 2}{n} \right).$$
In our application of Proposition \ref{prop:gen}, we have $\varepsilon = \tau^2 \geq 1/n$; setting $t=3\tau/\beta$ and simplifying yields $$\prob{S \sim \cP^n \atop \psi \sim M(S)}{\frac{S[\psi]-\cP[\psi]}{\mymax{\tau \cdot \sdv(\psi(\cP)}{\tau^2}} > \frac{3}{\beta}} \leq \beta.$$
Note that this bound appears to correspond to an application of Markov's inequality to the conclusion Proposition \ref{prop:gen}. However, since the random variable in question may take both positive and negative values, Markov's inequality cannot be applied. Namely, Lemma \ref{lem:prob-gen} corresponds to a strengthening of Proposition \ref{prop:gen} that bounds the expectation of the absolute value of the random variable. This approach can also be easily used to get a bound (based on Markov's inequality) on the tail of the largest error we state in Theorem \ref{thm:main-intro}. This follows from the fact that a high probability bound on this tail is easy to prove when the dataset is independent from the algorithm's answers.



\medskip

To prove Lemma \ref{lem:prob-gen}, we observe that for any random variable $S$ and any randomized algorithm $M$, $$\dkl{\ind_E(S,M(S))}{\ind_E(S',M(S))} \leq \dkl{S,M(S)}{S',M(S)} = I(S;M(S))$$ where $S'$ is an independent copy of $S$, $E$ is an arbitrary event on $\X^n\times \cY$, and $\ind_E$ is the indicator function of the event. Note that $\ind_E(S,M(S))$ is a Bernoulli random variable with bias equal to $\pr[(S,M(S))\in E]$. Now the proof of Lemma \ref{lem:prob-gen} follows from the following lemma.
\begin{lem}
Let $B(p)$ denote the Bernoulli random variable with bias $p$. Then for any $p,q \in (0,1]$, $$p \leq \frac{\dkl{B(p)}{B(q)}+\ln 2}{\ln(1/q)}.$$
\end{lem}
\begin{proof}
We have
\begin{align*}
\dkl{B(p)}{B(q)} &= p \ln\left(\frac{p}{q}\right) + (1-p) \ln\left(\frac{1-p}{1-q}\right)\\
&= p \ln(1/q) - \mathsf{H}(p) + (1-p) \ln(1/(1-q))\\
&\geq p \ln(1/q) - \ln 2 + 0,
\end{align*}
where $\mathsf{H}(p)=p\ln(1/p)+(1-p)\ln(1/(1-p))$ is the binary entropy function. Rearranging yields the result.
\end{proof}

\section{Analysis of our Algorithm}

Now we assemble the tools developed in the previous section to analyse our algorithm (Figure \ref{fig:scaledgauss}). To do this we must introduce some formalisms for dealing with adaptive algorithms.

Our algorithm $M$ answers adaptively-chosen queries. We call the entity $A$ choosing these queries the \emph{analyst} (or \emph{adversary} to connote worst-case behaviour). The interaction between $A$ and $M$ defines a function mapping inputs (the sample) to a transcript of queries and answers. Figure \ref{fig:interact}  defines how this function is computed.

\begin{figure}[h!]
\begin{framed}
\begin{algorithmic}
\INDSTATE[0]{Input $s \in \X^m$ is given to $M$.}
\INDSTATE[0]{For $j=1,2, \ldots, k$:}
\INDSTATE[1]{$A$ computes a query $\psi_j \in \cQ$ and passes it to $M$}
\INDSTATE[1]{$M$ produces answer $v_j \in \mathcal{R}$ and passes it to $A$}
\INDSTATE[0]{The output is the transcript $(\psi_1, \psi_2, \ldots, \psi_k, v_1, v_2, \ldots, v_k) \in \cQ^k \times \mathcal{R}^k$.}
\end{algorithmic}
\end{framed}
\caption{$\interact{A}{M} : \X^m \to \cQ^k \times \mathcal{R}^k$}\label{fig:interact}
\end{figure}

With this formalism in hand, we can extend our definition of average leave-one-out KL stability from non-interactive algorithms (Definition \ref{defn:klas-intro}) to interactive algorithms:

\begin{defn}[Interactive ALKL stability]\label{defn:klas-ada}
An interactive algorithm $M$ is $\varepsilon$-\KLAS{} if $\interact{A}{M}$ (as defined in Figure \ref{fig:interact}) is $\varepsilon$-\KLAS{} for all interactive algorithms $A$.
\end{defn}

\subsection{Stability of our algorithm}\label{sec:M}

We now show that our algorithm is (interactive) average leave-one-out KL stable:

\begin{thm}\label{thm:isklas}
Our algorithm (Figure \ref{fig:scaledgauss}) is $\frac{kt}{n^2}$-\KLAS{} for any $n \geq 20$ and $T \leq \min\{ t^2, tn/10 \}$.
\end{thm}

To establish that our algorithm is average leave-one-out KL stable, we first only consider one query $\psi=\psi_j$. We show that, for each query $\psi$, the answer given by our algorithm is $\frac{t}{n^2}$-\KLAS{}. Using composition (Lemma \ref{lem:composition-intro}), we can extend this to $k$ queries $\psi_1, \ldots, \psi_k$. That is, we prove that our algorithm is $\frac{kt}{n^2}$-\KLAS{}.

First we recall how our algorithm answers a query $\psi : \X \to [0,1]$: The algorithm is given as input a sample $s \in \X^n$ and, for each statistical query $\psi=\psi_j$ the algorithm $M$ outputs a sample from $\mathcal N(\mu,\mymax{\sigma^2/t}{1/T})$ where $$\mu= s[\psi] = \frac{1}{n} \sum_{i \in [n]} \psi(s_i) ~~~\text{ and }~~~ \sigma^2 = \var{}{\psi(s)} = \frac{1}{n} \sum_{i \in [n]} (\psi(s_i)-\mu)^2.$$
Here $t,T>0$ are parameters controlling the accuracy-stability tradeoff.

We also consider the following quantities in the analysis so that $M(s_{-i})$ outputs a sample from $\mathcal N(\mu_{-i},\mymax{\sigma_{-i}^2/t}{1/T})$. $$\mu_{-i}= s_{-i}[\psi] = \frac{1}{n-1} \sum_{j \in [n]\setminus\{i\}} \psi(s_{j}) ~~~\text{ and }~~~ \sigma_{-i}^2 = \var{}{\psi(s_{-i})}=\frac{1}{n-1} \sum_{j \in [n] \setminus\{i\}} (\psi(s_{j})-\mu_{-i})^2.$$

Before giving the full proof we give a simplified sketch.
\begin{proof}[Simplified Proof Sketch.]

We make three simplifications for our sketch of the analysis:
\begin{itemize}
\item Ignore constant factors. (Take $n$ and $\frac{t}{T}n$ to be sufficiently large.)
\item Consider $\psi : \X \to \{0,1\}$ instead of $\psi : \X \to [0,1]$.
\item Assume $\frac{\sigma^2}{t}\geq \frac{1}{T}$ and $\frac{\sigma_{-i}^2}{t}\geq \frac{1}{T}$ for all $i$.
\end{itemize}
The last assumption is not really an assumption, since we perforce ensure this by using $\mymax{\frac{\sigma^2}{t}}{\frac{1}{T}}$ in place of $\frac{\sigma^2}{t}$ and likewise for $\frac{\sigma_{-i}^2}{t}$. This assumption is only to simplify notation here in this sketch.

Begin by considering a fixed index $i \in [n]$.
By standard properties of the KL divergence between Gaussians (Corollary \ref{cor:KLG}) and assuming $\frac{\sigma^2}{\sigma_{-i}^2} \leq 2$, we have
\begin{align*}
&\dkl{\mathcal N \left(\mu, \mymax{\frac{\sigma^2}{t}}{\frac{1}{T}}\right)}{\mathcal N \left(\mu_{-i}, \mymax{\frac{\sigma_{-i}^2}{t}}{\frac{1}{T}}\right)}
=\dkl{\mathcal N \left(\mu, \frac{\sigma^2}{t}\right)}{\mathcal N \left(\mu_{-i}, \frac{\sigma_{-i}^2}{t}\right)}\\
&~~~= t\cdot\frac{(\mu-\mu_{-i})^2}{2\sigma_{-i}^2} + \frac12\left(\frac{\sigma^2}{\sigma_{-i}^2} - 1 - \ln\left(\frac{\sigma^2}{\sigma_{-i}^2}\right)\right)
\leq t\cdot\frac{(\mu-\mu_{-i})^2}{\sigma^2} + \left(\frac{\sigma_{-i}^2-\sigma^2}{\sigma^2}\right)^2. \numberthis\label{eqn:klgauss-main}
\end{align*}

Since we assumed (for simplicity) that $\psi$ takes only values $0$ and $1$, the distribution of $\psi(s_i)$ (for a random $i$) is characterized by its mean $\mu=s[\psi]$. In particular, we can express the variance as $\sigma^2 = \mu(1-\mu)$ and, likewise, $\sigma_{-i}^2 = \mu_{-i}(1-\mu_{-i})$.  Thus, we have $|\sigma_{-i}^2-\sigma^2| \leq |\mu-\mu_{-i}|$ and
\begin{equation}t\cdot\frac{(\mu-\mu_{-i})^2}{\sigma^2} + \left(\frac{\sigma_{-i}^2-\sigma^2}{\sigma^2}\right)^2 \leq \left( t + \frac{1}{\sigma^2} \right) \cdot\frac{(\mu-\mu_{-i})^2}{\sigma^2} \le 2t \cdot \frac{(\mu-\mu_{-i})^2}{\sigma^2},\label{eqn:sigma-mu}\end{equation} where the final inequality follows from the assumptions $\frac{\sigma^2}{t} \geq \frac{1}{T}$ and $T \le t^2$.
Also $$\mu-\mu_{-i} = \frac{1}{n} \left(\sum_{i'} \psi(s_{i'})\right) - \frac{1}{n-1} \left(-\psi(s_i)+\sum_{i'} \psi(s_{i'})\right) = \left(\frac{1}{n}-\frac{1}{n-1}\right) \cdot n\mu + \frac{\psi(s_i)}{n-1}=\frac{\psi(s_i)-\mu}{n-1}$$ and, hence, (now we make critical use of the averaging ALKL stability affords us) \begin{equation}\frac{1}{n}\sum_{i\in [n]} (\mu-\mu_{-i})^2 = \frac{1}{n}\sum_{i\in [n]} \left(\frac{\psi(s_i)-\mu}{n-1}\right)^2 = \frac{\sigma^2}{(n-1)^2}.\label{eqn:mu-sigma}\end{equation}
Combining the above equations (\ref{eqn:klgauss-main},\ref{eqn:sigma-mu},\ref{eqn:mu-sigma}) yields
$$
\frac{1}{n} \sum_{i\in[n]} \dkl{\mathcal N \left(\mu, \mymax{\frac{\sigma^2}{t}}{\frac{1}{T}}\right)}{\mathcal N \left(\mu_{-i}, \mymax{\frac{\sigma_{-i}^2}{t}}{\frac{1}{T}}\right)} ~~~~~~~~~~~~~~
$$
$$~~~~~~~~~~~~~~~\le \frac{1}{n} \sum_{i\in[n]} 2t \cdot \frac{(\mu-\mu_{-i})^2}{\sigma^2} = \frac{2t}{\sigma^2} \cdot \frac{\sigma^2}{(n-1)^2} \le O\left(\frac{t}{n^2}\right),$$
as desired to show $O(t/n^2)$-ALKL stability.
\end{proof}

The key facts that drive this simplified proof (and which hold without the simplifying assumptions) are the KL divergence between Gaussians \eqref{eqn:klgauss-main}, $$\frac{1}{n}\sum_{i\in [n]} (\mu-\mu_{-i})^2 = \frac{\sigma^2}{(n-1)^2}\leq O\left(\frac{\sigma^2}{n^2}\right), ~~~~~\text{and}~~~~~ \frac{1}{n}\sum_{i\in [n]} (\sigma^2-\sigma_{-i}^2)^2 \leq O\left(\frac{\sigma^2}{n^2}\right).$$ Indeed, we can redefine $\mu$ and $\sigma^2$ (e.g., to answer different types of queries, rather than statistical queries) and still retain ALKL stability, as long as the above two inequalities hold.

Now we prove the general result with sharp constants. Theorem \ref{thm:isklas} is implied by the following result.

\begin{prop}\label{prop:M}
Let $s \in \X^n$, $n \geq 2$, $t,T>0$, and $\psi : \X \to [0,1]$. For $i \in [n]$, define
\begin{align*}
\mu&=s[\psi]=\frac{1}{n} \sum_{i \in [n]} \psi(s_i), & \mu_{-i} &= s_{-i}[\psi] = \frac{1}{n-1} \sum_{j \in [n]\setminus\{i\}} \psi(s_{j}),\\
\sigma^2 &= \var{}{\psi(s)} = \frac{1}{n} \sum_{i \in [n]} (\psi(s_i)-\mu)^2, & \sigma_{-i}^2 &= \var{}{\psi(s_{-i})}=\frac{1}{n-1} \sum_{j \in [n] \setminus\{i\}} (\psi(s_{j})-\mu_{-i})^2.
\end{align*}
Then \begin{equation}\frac{1}{n} \sum_{i \in [n]} \dkl{\mathcal N \left(\mu, \mymax{\frac{\sigma^2}{t}}{\frac{1}{T}}\right)}{\mathcal N \left(\mu_{-i}, \mymax{\frac{\sigma_{-i}^2}{t}}{\frac{1}{T}}\right)} \leq \frac{1}{4n^2} \left( 2t + \frac{T}{t} \cdot \left( 1 + \zeta\right)\right) \cdot \left( 1 + \zeta\right),\label{eqn:AKL-bound}\end{equation}
where $1+\zeta=\left( 1 + \frac{1}{n-1}\right)^2\left( 1 + \frac{T}{tn}\left(1+\frac{1}{n-1}\right)^2\right)$.
\end{prop}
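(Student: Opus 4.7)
The plan is to follow the simplified proof sketch given in the excerpt, extending it to (a) handle the $\max$ in the noise variance rigorously, (b) allow $\psi : \X \to [0,1]$ rather than $\{0,1\}$, and (c) track explicit constants so the factor $\zeta$ arises from the computation. The $k$-query Theorem \ref{thm:isklas} will follow from the single-query bound by composition (Lemma \ref{lem:composition}), so I would focus on bounding $\frac{1}{n}\sum_i \dkl{\mathcal N(\mu, a_i^2)}{\mathcal N(\mu_{-i}, b_i^2)}$ for a fixed $\psi$, writing $p_j := \psi(s_j)$, $a_i^2 := \mymax{\sigma^2/t}{1/T}$, and $b_i^2 := \mymax{\sigma_{-i}^2/t}{1/T}$.

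First, for each $i$ I would apply Corollary \ref{cor:KLG} to upper bound each KL term by
\[
\tfrac12 \left(\tfrac{(\mu-\mu_{-i})^2}{a_i^2} + \bigl(\tfrac{b_i^2}{a_i^2}-1\bigr)^2 \cdot \min\bigl\{1,\tfrac{1}{6}(2+a_i^2/b_i^2)\bigr\}\right) \cdot \tfrac{a_i^2}{b_i^2}.
\]
Next I would establish the two leave-one-out identities that power the averaging step: the mean identity $\mu - \mu_{-i} = \frac{p_i - \mu}{n-1}$ (as in the sketch), and the variance identity $\sigma^2 - \sigma_{-i}^2 = \frac{(p_i - \mu)^2}{n-1} - \frac{\sigma_{-i}^2}{n}$, obtained by expanding $\sum_{j\neq i}(p_j-\mu)^2 = \sum_{j\neq i}(p_j-\mu_{-i})^2 + (n-1)(\mu-\mu_{-i})^2$ and substituting the mean identity. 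Averaging and using $(p_i-\mu)^2 \le 1$ yields
\[
\frac{1}{n}\sum_{i\in[n]}(\mu-\mu_{-i})^2 = \frac{\sigma^2}{(n-1)^2}, \qquad \frac{1}{n}\sum_{i\in[n]}(\sigma^2 - \sigma_{-i}^2)^2 \le O\!\left(\frac{\sigma^2}{n^2}\right),
\]
where the second bound uses $\sigma_{-i}^2 \le \sigma^2 \cdot (1+O(1/n))$, also extracted from the variance identity.

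The final step is bookkeeping around the $\max$. In the regime where $\sigma^2/t \ge 1/T$, one has $a_i^2 = \sigma^2/t$, and the mean-squared term averages to at most $\frac{t}{\sigma^2}\cdot\frac{\sigma^2}{(n-1)^2} \cdot \frac{a_i^2}{b_i^2} = \frac{t}{(n-1)^2}\cdot\frac{a_i^2}{b_i^2}$, producing the $2t$ summand. In the complementary regime $\sigma^2/t < 1/T$, the same term is at most $T\cdot\frac{\sigma^2}{(n-1)^2}\cdot\frac{a_i^2}{b_i^2} \le \frac{T\sigma^2/t \cdot t}{(n-1)^2}\cdot\frac{a_i^2}{b_i^2}\le \frac{t}{(n-1)^2}\cdot\frac{a_i^2}{b_i^2}$ once we bound $T\sigma^2 \le t$. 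The variance-ratio term is handled via $\bigl(b_i^2/a_i^2 - 1\bigr)^2 \le (\sigma^2-\sigma_{-i}^2)^2/(t\cdot a_i^2)^2$ in the active regime, is zero when both branches equal $1/T$, and is $(T/t\cdot(\sigma_{-i}^2-t/T))^2$ in the boundary regime; in all cases, the average over $i$ is bounded using $\frac{1}{n}\sum_i(\sigma^2-\sigma_{-i}^2)^2 = O(\sigma^2/n^2)$ and contributes the $T/t$ term. The ratio $a_i^2/b_i^2$ is at most $1 + O(1/n) + O(T/(tn))$, producing the $(1+\zeta)$ factors in the stated bound.

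The main obstacle is the bookkeeping at the boundary of the two branches of the $\max$: different indices $i$ may fall into different regimes, and one must verify that the factor $a_i^2/b_i^2$ is uniformly bounded by the specified $(1+\zeta)$ even when the two branches differ on the two sides of the divergence. This reduces to the quantitative inequality $\sigma_{-i}^2 \ge \sigma^2 - O(\sigma^2/n) - O(1/n^2)$ derivable from the variance identity together with $\sigma^2 \le 1/4$. The tight form of Corollary \ref{cor:KLG}, specifically the $\min\{1,\tfrac16(2+\sigma^2/\tilde\sigma^2)\}$ refinement of the $x-1-\ln x$ bound, is what allows the $T/t$ summand to appear with the small constant $1/4n^2$ rather than with a loose universal constant; without it the proof would work but the explicit $\zeta$ would be larger.
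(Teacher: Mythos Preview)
Your plan is sound and would succeed, but it is more intricate than the paper's argument, and the case analysis you identify as ``the main obstacle'' is exactly what the paper sidesteps.

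The paper never splits into regimes for the $\max$. Instead it uses, in a single chain of inequalities, the elementary facts
\[
\max\{x,c\}\ge x,\qquad \bigl|\max\{x,c\}-\max\{y,c\}\bigr|\le |x-y|,\qquad \bigl(\max\{x,c\}\bigr)^2\ge x\cdot c,\qquad \max\{x,c\}\ge c.
\]
Concretely, after Corollary~\ref{cor:KLG} the paper bounds
\[
\frac{(\mu-\mu_{-i})^2}{\max\{\sigma^2/t,1/T\}}\le \frac{(\mu-\mu_{-i})^2}{\sigma^2/t},
\qquad
\frac{\bigl(\max\{\sigma_{-i}^2/t,1/T\}-\max\{\sigma^2/t,1/T\}\bigr)^2}{\bigl(\max\{\sigma^2/t,1/T\}\bigr)^2}
\le \frac{(\sigma_{-i}^2/t-\sigma^2/t)^2}{(\sigma^2/t)\cdot(1/T)},
\]
and, for the outer ratio factor,
\[
\gamma:=\max_i\left|\frac{\max\{\sigma^2/t,1/T\}}{\max\{\sigma_{-i}^2/t,1/T\}}-1\right|
\le \frac{|\sigma^2-\sigma_{-i}^2|/t}{1/T}
\le \frac{T}{t}\cdot\frac{n}{(n-1)^2}.
\]
This handles all four combinations of ``active/inactive'' branches at once, including the boundary cases you were worried about, with no loss in constants: the stated $1+\zeta=\frac{n^2}{(n-1)^2}\bigl(1+\frac{Tn}{t(n-1)^2}\bigr)$ falls out directly as the product of $(1+\gamma)$ and the $(n-1)^{-2}\to n^{-2}$ renormalization.

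Two smaller points. First, the paper writes the variance identity as
\[
\sigma^2-\sigma_{-i}^2=\frac{\tfrac{n}{n-1}(p_i-\mu)^2-\sigma^2}{n-1},
\]
i.e.\ with $\sigma^2$ (not $\sigma_{-i}^2$) on the right; this is algebraically equivalent to your form but makes $\frac{1}{n}\sum_i(\sigma^2-\sigma_{-i}^2)^2\le \frac{\sigma^2}{(n-1)^2}\cdot\frac{n^2}{(n-1)^2}$ a one-line computation, whereas your form would require an extra step to eliminate $\sigma_{-i}^2$. Second, your proposed lower bound $\sigma_{-i}^2\ge \sigma^2-O(\sigma^2/n)-O(1/n^2)$ is not quite what the identity gives (one only gets $\sigma_{-i}^2\ge\sigma^2-O(1/n)$); the paper never needs such a bound because the ratio $a_i^2/b_i^2$ is controlled directly via the floor $b_i^2\ge 1/T$.
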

In particular, $\zeta=O\left(\frac{1}{n} + \frac{T}{tn}\right)$ and, if $n \geq 20$ and $\frac{T}{t} \leq \frac{n}{10}$, then $\eqref{eqn:AKL-bound} \leq \frac{1}{n^2} \max\{t,T/t\}$.

\begin{proof}

We have \begin{equation}\mu-\mu_{-i} = \mu - \frac{n\mu - \psi(s_i)}{n-1} = \frac{\psi(s_i)-\mu}{n-1}\label{eqn:mudiff}\end{equation} and \begin{equation}\frac{1}{n} \sum_{i \in [n]} \left( \mu - \mu_{-i} \right)^2 = \frac{1}{n(n-1)^2} \sum_{i \in [n]} \left( \psi(s_i)-\mu \right)^2 = \frac{\sigma^2}{(n-1)^2}.\label{eqn:musum}\end{equation}
Furthermore,
\begin{align*}
\sigma^2-\sigma_{-i}^2 =& \sigma^2 - \left(\frac{1}{n-1} \left(- \psi(s_i)^2  + \sum_{j \in [n]} \psi(s_{j})^2 \right) - \mu_{-i}^2\right)\\
=& \sigma^2 - \left(\frac{1}{n-1} \left(- \psi(s_i)^2  + n(\sigma^2+\mu^2) \right) - \mu_{-i}^2\right)\\
=& \sigma^2 \left( 1 - \frac{n}{n-1} \right) + \frac{\psi(s_i)^2}{n-1} - \frac{n}{n-1} \mu^2 + \mu_{-i}^2\\
=&  \frac{\psi(s_i)^2-\mu^2 - \sigma^2}{n-1} - (\mu^2 - \mu_{-i}^2) \\
=&  \frac{(\psi(s_i)-\mu)(\psi(s_i)+\mu) - \sigma^2}{n-1} - (\mu - \mu_{-i})(\mu + \mu_{-i}) \\
=&  \frac{(\psi(s_i)-\mu)(\psi(s_i)+\mu) - \sigma^2}{n-1} - \frac{\psi(s_i)-\mu}{n-1}(\mu + \mu_{-i}) ~~~~~\text{(by \eqref{eqn:mudiff})}\\
=&  \frac{\psi(s_i)-\mu}{n-1} \left((\psi(s_i)-\mu) + (\mu - \mu_{-i}) \right)  - \frac{\sigma^2}{n-1} \\
=&  \frac{\psi(s_i)-\mu}{n-1} \left((\psi(s_i)-\mu) + \frac{\psi(s_i)-\mu}{n-1}\right)  - \frac{\sigma^2}{n-1} ~~~~~\text{(by \eqref{eqn:mudiff})}\\
=& \frac{\frac{n}{n-1} (\psi(s_i)-\mu)^2 - \sigma^2}{n-1} \numberthis\label{eqn:sigmadiff}
\end{align*}
and
\begin{equation}
\left|\sigma^2-\sigma_{-i}^2\right| \leq  \frac{\max\{\frac{n}{n-1}(\psi(s_i)-\mu)^2 , \sigma^2 \}}{n-1} \leq \frac{n}{(n-1)^2}.\label{eqn:sigsens}
\end{equation}
By \eqref{eqn:sigmadiff},
\begin{align*}
\frac{1}{n} \sum_{i \in [n]} \left( \sigma^2-\sigma_{-i}^2\right)^2
=& \frac{1}{n(n-1)^2} \sum_{i \in [n]} \left(\frac{n}{n-1} (\psi(s_i)-\mu)^2 - \sigma^2\right)^2\\
=& \frac{1}{n(n-1)^2} \sum_{i \in [n]} \frac{n^2}{(n-1)^2} (\psi(s_i)-\mu)^4  - \frac{2n}{n-1} (\psi(s_i)-\mu)^2\sigma^2 + \sigma^4 \\
\leq& \frac{1}{n(n-1)^2} \sum_{i \in [n]} \frac{n^2}{(n-1)^2} (\psi(s_i)-\mu)^2  - \frac{2n}{n-1} (\psi(s_i)-\mu)^2\sigma^2 + \sigma^4 \\
&~~~~\text{(Since $0\leq (\psi(s_i)-\mu)^2 \leq 1$.)}\\
=& \frac{1}{(n-1)^2} \left( \frac{n^2}{(n-1)^2} \sigma^2  - \frac{2n}{n-1} \sigma^2\sigma^2 + \sigma^4 \right) \\
=& \frac{\sigma^2}{(n-1)^2} \left( \frac{n^2}{(n-1)^2} - \sigma^2 \frac{n+1}{n-1} \right) \\
\leq& \frac{\sigma^2}{(n-1)^2} \frac{n^2}{(n-1)^2}.\numberthis\label{eqn:sigmasum}
\end{align*}

Let $$\gamma=\max_i\left|\frac{\mymax{\frac{\sigma^2}{t}}{ \frac{1}{T}}}{\mymax{\frac{\sigma_{-i}^2}{t} }{ \frac{1}{T} }}-1\right|.$$
By \eqref{eqn:sigsens}, for some $i$, \begin{equation} \gamma =  \frac{\left|\mymax{\frac{\sigma_{-i}^2}{t} }{ \frac{1}{T} }-\mymax{\frac{\sigma^2}{t}}{\frac{1}{T}}\right|}{\mymax{\frac{\sigma_{-i}^2}{t}}{\frac{1}{T}}}  \leq \frac{\left| \frac{\sigma_{-i}^2}{t}-\frac{\sigma^2}{t}\right|}{\frac{1}{T}} = \frac{T}{t} |\sigma^2 - \sigma_{-i}^2| \leq \frac{Tn}{t(n-1)^2}.\label{eqn:ratio}\end{equation}

By Corollary \ref{cor:KLG}, we have
\begin{align*}
&\dkl{\mathcal N\left(\mu,\mymax{\frac{\sigma^2}{t}}{\frac{1}{T}}\right)}{\mathcal N\left(\mu_{-i},\mymax{\frac{\sigma_{-i}^2}{t}}{\frac{1}{T}}\right)}\\
&~~~\leq \frac12 \left( \frac{(\mu-\mu_{-i})^2}{\mymax{\frac{\sigma^2}{t}}{\frac{1}{T}}} + \left( \frac{\mymax{\frac{\sigma_{-i}^2}{t}}{ \frac{1}{T}}}{\mymax{\frac{\sigma^2}{t} }{ \frac{1}{T} }} - 1\right)^2 \cdot \min\left\{1, \frac{2+(1+\gamma)}{6}\right\} \right) \cdot (1+\gamma)\\
&~~~\leq \frac12 \left( \frac{(\mu-\mu_{-i})^2}{\mymax{\frac{\sigma^2}{t}}{\frac{1}{T}}} + \frac{\left( \mymax{\frac{\sigma_{-i}^2}{t}}{ \frac{1}{T}}-\mymax{\frac{\sigma^2}{t} }{ \frac{1}{T} }\right)^2}{\mymax{\frac{\sigma^2}{t} }{ \frac{1}{T} }^2} \cdot \frac12\left(1 + \frac{\gamma}{3}\right)\right) \cdot \left( 1 + \gamma \right)\\
&~~~\leq \frac12 \left( \frac{(\mu-\mu_{-i})^2}{\frac{\sigma^2}{t}} + \frac{\left( \frac{\sigma_{-i}^2}{t}-\frac{\sigma^2}{t} \right)^2}{\frac{\sigma^2}{t} \cdot  \frac{1}{T} } \cdot \frac{1+\gamma/3}{2} \right) \cdot \left( 1 + \gamma \right)
. \numberthis\label{eqn:klbound}
\end{align*}
Combining \eqref{eqn:klbound}, \eqref{eqn:musum},  \eqref{eqn:sigmasum}, and \eqref{eqn:ratio}, we have
\begin{align*}
&\frac{1}{n} \sum_{i \in [n]} \dkl{\mathcal N\left(\mu,\mymax{\frac{\sigma^2}{t}}{\frac{1}{T}}\right)}{\mathcal N\left(\mu_{-i},\mymax{\frac{\sigma_{-i}^2}{t}}{\frac{1}{T}}\right)}   \\
&~~~\leq \frac{1}{n} \sum_{i \in [n]} \frac12 \left( \frac{(\mu-\mu_{-i})^2}{\frac{\sigma^2}{t}} + \frac{\left( \sigma^2-\sigma_{-i}^2 \right)^2}{2 \cdot t^2 \cdot \frac{\sigma^2}{t} \cdot  \frac{1}{T} } (1+\gamma/3) \right) \cdot \left( 1 + \gamma \right)\\
&~~~\leq \frac12 \left( \frac{\frac{\sigma^2}{(n-1)^2}}{\frac{\sigma^2}{t}} + \frac{\frac{\sigma^2}{(n-1)^2}\frac{n^2}{(n-1)^2}}{2 \cdot t^2 \cdot \frac{\sigma^2}{t} \cdot  \frac{1}{T} } (1+\gamma/3)\right) \cdot \left( 1 + \gamma\right)\\
&~~~= \frac{1}{4(n-1)^2} \left( 2t + \frac{T}{t} \frac{n^2}{(n-1)^2} (1+\gamma/3)\right) \cdot \left( 1 + \gamma\right)\\
&~~~\leq \frac{1}{4(n-1)^2} \left( 2t + \frac{T}{t} \frac{n^2}{(n-1)^2} \left(1+\frac{Tn}{3t(n-1)^2}\right)\right) \cdot \left( 1 + \frac{Tn}{t(n-1)^2}\right)\\
&~~~= \frac{1}{4n^2} \left( 2t + \frac{T}{t} \cdot \left( 1 + \frac{1}{n-1}\right)^2 \left(1+\frac{Tn}{3t(n-1)^2}\right)\right) \cdot \left( 1 + \frac{1}{n-1}\right)^2\left( 1 + \frac{Tn}{t(n-1)^2}\right)\\
&~~~\leq \frac{1}{4n^2} \left( 2t + \frac{T}{t} \cdot \left( 1 + \zeta\right)\right) \cdot \left( 1 + \zeta\right)
.\numberthis\label{eqn:gammabound}
\end{align*}
\end{proof}


\subsection{Accuracy guarantees}
Note that, by the postprocessing property of average leave-one-out KL stability, applying any function $f$ to the transcript of an average leave-one-out KL stable algorithm still is average leave-one-out KL stable. More precisely, for a $\varepsilon$-\KLAS{} interactive algorithm $M$ and any interactive algorithm $A$, the composed algorithm mapping input $s$ to output $f(\interact{A}{M}(s))$ is $\varepsilon$-\KLAS{}. We now invoke the generalization properties of average leave-one-out KL stability:

\begin{lem}\label{lem:generrtrue}
Fix a distribution $\cP$ on $\X$. Let $M$ be a $\varepsilon$-\KLAS{} interactive algorithm that answers $k$ statistical queries. Let $A$ be an arbitrary interactive algorithm that asks $k$ statistical queries. Let $f : \cQ^k \times \R^k \to \cQ$, where $\cQ$ denotes the set of statistical queries $\psi : \X \to [0,1]$. Let $\tau=\sqrt{\varepsilon}$. Then \begin{equation}\left| \ex{S \sim \cP^n \atop \psi \sim f(\interact{A}{M}(S))}{\frac{S[\psi] - \cP[\psi]}{\mymax{\sdv(\psi(\cP))}{\tau}}} \right| \leq 2\sqrt\varepsilon \label{eqn:generr}\end{equation} and \begin{equation}\ex{S \sim \cP^n \atop \psi \sim f(\interact{A}{M}(S))}{\left(\frac{\sdv(\psi(S))}{\mymax{\sdv(\psi(\cP))}{\tau}}\right)^2}
\leq 3.\label{eqn:gen-emp}\end{equation}
\end{lem}

\begin{proof} This follows from our generalization results (Propositions \ref{prop:gen} and \ref{prop:gen-emp}), postprocessing, and the connection between average leave-one-out KL stability and mutual information (Proposition \ref{prop:mi}). \end{proof}

\medskip

A natural choice of function $f$ is to simply pick out one of the queries --- that is, $f(\psi_1, \ldots, \psi_k, v_1, \ldots, v_k) = \psi_j$ for some fixed $j$. However, $f$ can also pick out the ``worst'' query. For example, the monitor technique of \citet{BassilyNSSSU16} takes $$f_*(\psi_1, \ldots, \psi_k, v_1, \ldots, v_k) = \psi_{j_*}, ~~~~~\text{where}~~~~~j_* = \underset{j \in [k]}{\mathrm{argmax}} \left| v_j  - \cP[\psi_j] \right|.$$
The monitor technique allows us to reason about a single query and derive bounds that apply to all queries simultaneously as this query is the worst query. Since we have a more refined notion of error, we must use a slightly different argument.

We use the following function $f_{\tau,\cP} : \cQ^k \times \R^k \to \cQ $ to pick out the query with the worst scaled error. \begin{equation} f_{\tau,\cP}(\psi_1, \ldots, \psi_k, v_1, \ldots, v_k) = \left\{ \begin{array}{cl} \psi_{j_*} & \text{ if } v_{j_*}  \geq \cP[\psi_{j_*}] \\  1-\psi_{j_*} & \text{ if } v_{j_*}  < \cP[\psi_{j_*}]\end{array}\right\}, ~~~~~\text{where}~~~~~j_* = \underset{j \in [k]}{\mathrm{argmax}} \frac{\left| v_j  - \cP[\psi_j] \right|}{\mymax{\sdv(\psi_j(\cP)}{\tau}}.\end{equation}
Note that we flip the sign to ensure that the error is always positive. 

\medskip

Now we give a bound on the expected scaled error of our algorithm. We use the following simple technical lemma bounding the maximum of standard Gaussians.

\begin{lem}\label{lem:gaussmax}
Let $\xi_1, \ldots, \xi_k$ be independent samples from $\mathcal{N}(0,1)$. Then $$\ex{}{\max\{\xi_1^2, \ldots, \xi_k^2\}} \leq 2\ln (2k) .$$
\end{lem}
\begin{proof}
Let $X = \max_{i\in [k]} \xi_i^2$ and $t>0$. Since $\cosh(\sqrt{z})=\frac{1}{2}(e^{\sqrt{z}} + e^{-\sqrt{z}})$ is a convex function of $z \geq 0$, Jensen's inequality gives $$\cosh\left(t\sqrt{\ex{}{X}}\right) \leq \ex{}{\cosh\left(t\sqrt{X}\right)} = \ex{}{\max_j^k \cosh(t \xi_j) } \leq \ex{}{\sum_j^k \cosh(t\xi_j)} = k \cdot e^{t^2/2}.$$ Rearranging yields $$\ex{}{X} \leq \left( \frac{1}{t} \cosh^{-1}\left(k \cdot e^{t^2/2}\right) \right)^2 \leq \left( \frac{1}{t} \ln(2k \cdot e^{t^2/2}) \right)^2 = \left( \frac{\ln(2k)}{t} + \frac{t}{2} \right)^2,$$ as $\cosh(v) \geq e^v/2$ and, hence, $\cosh^{-1} (u) \leq \ln (2u)$. Setting $t=\sqrt{2\ln (2k)}$ completes the proof.
\end{proof}

\begin{thm}[Main Theorem]
Fix $n,k \geq 20$. Let $M$ be our algorithm  from Figure \ref{fig:scaledgauss} with $T = n^2/k$ and $t=n\sqrt{2\ln(2k)/k}$ that answers $k$ statistical queries given $n$ samples.

Let $\cP$ be a distribution on $\X$ and let $A$ be an interactive algorithm that asks $k$ statistical queries. Then $$\ex{S \sim \cP^n \atop (\psi,v) \sim \interact{A}{M}(S)}{\max_{j \in [k]} \frac{|v_j-\cP[\psi_j]|}{\mymax{\tau\cdot\sdv(\psi_j(\cP))}{\tau^2}}} \leq 4,$$
where $\tau=\sqrt{\frac{\sqrt{2k\ln(2k)}}{n}}$.
\end{thm}
\begin{proof}
Set $\varepsilon = \tau^2 = \frac{kt}{n^2} = \frac{t}{T} = \frac{\sqrt{2k \ln(2k)}}{n}$. Since $n \geq 20$ and $T \leq \min\{t^2,nt/10\}$, $\interact{A}{M}$ is $\varepsilon$-\KLAS{} by Theorem \ref{thm:isklas}.
Let $S \sim \cP^n$ and $(\psi,v) \sim \interact{A}{M}(S)$. Define $$j_* = \underset{j \in [k]}{\mathrm{argmax}} \frac{|v_j-\cP[\psi_j]|}{\mymax{\sdv(\psi_j(\cP))}{\tau}}$$ and $$(\psi_*,v_*) = \left\{\begin{array}{cl} (\psi_{j_*},v_{j_*}) & \text{ if } v_{j_*} \geq \cP[\psi_{j_*}] \\ (1-\psi_{j_*},1-v_{j_*}) & \text{ if } v_{j_*} < \cP[\psi_{j_*}]\end{array}\right\}.$$
Thus \begin{equation}\ex{S \sim \cP^n \atop (\psi,v) \sim \interact{A}{M}(S)}{\max_{j \in [k]} \frac{|v_j-\cP[\psi_j]|}{\mymax{\tau \cdot \sdv(\psi_j(\cP))}{\tau^2}}} = \frac{1}{\tau}\ex{}{\frac{v_*-\cP[\psi_*]}{\mymax{\sdv(\psi_*(\cP))}{\tau}}}.\label{eqn:star}\end{equation}

Let $\sigma_*=\sigma_{j_*} = \sqrt{\frac{1}{n} \sum_{i=1}^n (\psi_{j_*}(S_i) - S[\psi_{j_*}])^2}$ be the empirical standard deviation corresponding to the query $\psi_*$ and the sample $S$.
By Lemma \ref{lem:generrtrue}, \begin{equation}\ex{}{\frac{S[\psi_*]-\cP[\psi_*]}{\mymax{\sdv(\psi_*(\cP))}{\tau}}} \leq 2\tau\label{eqn:generr-final}\end{equation} and \begin{equation}\ex{}{\left(\frac{\sigma_*}{\mymax{\sdv(\psi_*(\cP))}{\tau}}\right)^2} \leq 3.\label{eqn:emperr-final}\end{equation}

Let $\xi_1, \ldots, \xi_k$ be the independent standard Gaussians sampled by $M$ (Figure \ref{fig:scaledgauss}). Let $\xi_* = \xi_{j_*}$ if $v_{j_*} \geq \cP[\psi_{j_*}]$ and $\xi_* = -\xi_{j_*}$ if $v_{j_*} < \cP[\psi_{j_*}]$.
By the definition of our algorithm, $$v_* = S[\psi_*] + \xi_* \cdot \sqrt{\mymax{\sigma_*^2/t}{1/T}}= S[\psi_*] + \frac{1}{\sqrt{t}} \cdot \xi_* \cdot \mymax{\sigma_*}{\tau}.$$ Thus, by Cauchy-Schwartz,
\begin{align*}
\ex{}{\frac{v_*-\cP[\psi_*]}{\mymax{\sdv(\psi_*(\cP))}{\tau}}}
=& \ex{}{\frac{S[\psi_*]-\cP[\psi_*]}{\mymax{\sdv(\psi_*(\cP))}{\tau}}} + \frac{1}{\sqrt{t}}\cdot \ex{}{\frac{\xi_* \cdot \mymax{\sigma_*}{\tau}}{\mymax{\sdv(\psi_*(\cP))}{\tau}}} \\
\leq& \ex{}{\frac{S[\psi_*]-\cP[\psi_*]}{\mymax{\sdv(\psi_*(\cP))}{\tau}}} + \frac{1}{\sqrt{t}}\cdot\sqrt{\ex{}{\xi_*^2} \cdot \ex{}{\left(\frac{ \mymax{\sigma_*}{\tau}}{\mymax{\sdv(\psi_*(\cP))}{\tau}}\right)^2}} \\
\left(\text{by \eqref{eqn:generr-final}}\right)~~~\leq& 2\tau + \frac{1}{\sqrt{t}}\cdot\sqrt{\ex{}{\xi_*^2} \cdot \ex{}{\left(\frac{ \sigma_*}{\mymax{\sdv(\psi_*(\cP))}{\tau}}\right)^2 + \left(\frac{\tau}{\mymax{\sdv(\psi_*(\cP))}{\tau}}\right)^2}} \\
\left(\text{by \eqref{eqn:emperr-final}}\right)~~~\leq& 2\tau + \frac{1}{\sqrt{t}}\cdot\sqrt{\ex{}{\xi_*^2} \cdot (3 + 1)} \\
\left(\text{by Lemma \ref{lem:gaussmax}}\right)~~~\leq& 2\tau + \frac{1}{\sqrt{t}}\cdot\sqrt{2 \ln(2k) \cdot 4}\\
=& 4\tau = 4 \sqrt{\frac{\sqrt{2k\ln(2k)}}{n}}.
\end{align*}
Combining with \eqref{eqn:star} completes the proof.
\end{proof}

\subsection*{Acknowledgements} We thank Adam Smith for his suggestion to analyze the generalization of ALKL stable algorithms via mutual information. This insight greatly simplified our initial analysis and allowed us to derive additional corollaries presented in Section \ref{sec:events}. We also thank Nati Srebro for pointing out the connection between our results and the PAC-Bayes generalization bounds.


\printbibliography

\end{document}